\newtheorem{theorem}{Theorem}
\newtheorem{lemma}[theorem]{Lemma}
\newcommand{\BibTeX}{B\kern-.05em{\sc i\kern-.025em b}\kern-.08em\TeX}
\colorlet{gray}{black!33}
\algnewcommand{\IIf}[1]{\State\algorithmicif\ #1\ \algorithmicthen}
\algnewcommand{\EndIIf}{}
\algnewcommand{\IIF}[1]{\State \color{gray}\algorithmicif\ #1\ }
\algnewcommand{\ENDIIF}{\color{black}}
\algnewcommand{\IIFE}[3]{\State \color{gray}\algorithmicif\ #1\ \algorithmicthen\ #2\ \algorithmicelse\ #3}
\algnewcommand{\ENDIIFE}{\color{black}}
\algnewcommand{\STATE}[1]{\colorlet{oldcolor}{.}\State \color{gray} #1 \color{oldcolor}}
\colorlet{oldcolor}{.}\color{gray}\algorithmicif\ #1\ \algorithmicthen\color{oldcolor}}
\colorlet{oldcolor}{.}\color{gray}\algorithmicend\ \algorithmicprocedure\color{oldcolor}}
\colorlet{oldcolor}{.}\color{gray}\algorithmicprocedure\ \mathsc{#1}(#2)\color{oldcolor}}
\colorlet{oldcolor}{.}\color{gray}\algorithmicend\ \algorithmicprocedure\color{oldcolor}}
\colorlet{oldcolor}{.}\color{gray}\algorithmicforall\ #1\color{oldcolor}}
\newcommand{\secref}[1]{\S\ref{#1}}
\newcommand{\LAMP}{LAMP\xspace}
\newcommand{\LandmarkPlan}{\ensuremath{\textsc{LandmarkPlan}}\xspace}
\newcommand{\cpvar}[1]{\ensuremath{#1_{C}}}
\newcommand\detvar[1]{\ensuremath{#1_D}}
\newcommand{\prob}{P}
\newcommand{\mathsc}[1]{{\normalfont\textsc{#1}}}
\newcommand{\var}[1]{\ensuremath{\mathit{#1}}}
\newcommand\domain[1]{\texttt{#1}}
\DeclareMathOperator*{\argmax}{arg\,max} 
\newcommand{\gradient}[6]{\ifdimcomp{#1pt}{>}{#4pt}{#1}{\ifdimcomp{#1pt}{<}{#2pt}{#1}{\ifdimcomp{#1pt}{>}{#3pt}{\pgfmathparse{100*(#1-#3)/(#4-#3)}\xdef\temp{\pgfmathresult}\cellcolor{#5!\temp!white}}{\pgfmathparse{100*(#3-#1)/(#3-#2}\xdef\temp{\pgfmathresult}\cellcolor{#6!\temp!white}}}}}
\newcommand\gr[1]{\gradient{#1}{0}{50}{100}{red}{teal}}
\newlength{\colwidth}
\newcommand{\continuefigcaption}[1]{\renewcommand{\thefigure}{A\arabic{figure} (cont'd)}\addtocounter{figure}{-1}\caption{#1}\renewcommand{\thefigure}{A\arabic{figure}}}
\begin{document}

\begin{frontmatter}


\paperid{3549} 


\title{Landmark-Assisted Monte Carlo Planning}


\author[A,C]{\fnms{David}~\snm{Chan}\thanks{Corresponding Author. Email: dhchan@cs.umd.edu}\orcid{}}
\author[C]{\fnms{Mark}~\snm{Roberts}\orcid{}}
\author[A,B]{\fnms{Dana}~\snm{Nau}\orcid{}}

\address[A]{Department of Computer Science, University of Maryland, College Park, MD, USA }
\address[B]{Institute for Systems Research, University of Maryland, College Park, MD, USA}
\address[C]{Navy Center for Applied Research in Artificial Intelligence, U.S. Naval Research Laboratory, Washington, DC, USA}


\begin{abstract}
Landmarks---conditions that must be satisfied at some point in every solution plan---have contributed to major advancements in classical planning, but they have seldom been used in stochastic domains. 
We formalize probabilistic landmarks and adapt the UCT algorithm to leverage them as subgoals to decompose MDPs; 
core to the adaptation is balancing between greedy landmark achievement and final goal achievement. 
Our results in benchmark domains show that well-chosen landmarks can significantly improve the performance of UCT in online probabilistic planning, while the best balance of greedy versus long-term goal achievement is problem-dependent.
The results suggest that landmarks can provide helpful guidance for anytime algorithms solving MDPs. 
\end{abstract}

\end{frontmatter}

\section{Introduction}

Landmarks are conditions that must be true in any solution to a planning problem. In classical planning, landmarks have been used to focus search \cite{hoffmann2004ordered} and for heuristic guidance \cite{richter2010lama}. Ongoing work has developed landmarks for more sophisticated heuristics (see \secref{sec:related}).

In stochastic planning, however, landmarks have only been used in limited settings. \citet{speck2015necessary} used a kind of landmark analysis to identify necessary observations. There is also some notable work on using landmarks, or critical states, for plan explanation: \citet{sreedharan2020tldr} defined policy landmarks in MDPs to generate user-explainable policies. To our knowledge, there is scant work exploring the direct application of landmarks to MDP algorithms.

To address this gap, we develop an algorithm called \LAMP---\textbf{L}andmark-\textbf{A}ssisted \textbf{M}onte Carlo \textbf{P}lanning---that uses landmarks as subgoals to solve stochastic planning problems, much like their original use in classical planning.  During rollouts, exploiting landmarks is often more helpful with fewer rollouts and less helpful for a larger rollout budget.  So, we develop a mechanism that balances a focus on achieving near-term landmarks with a focus on the long-term goal in a manner reminiscent of Weighted A* within LAMA~\cite{richter2010lama}.
 
The contributions in this paper include: 
\begin{itemize}
    \item formalizing probabilistic landmarks as a  natural extension of classical landmarks, first defined by \citet{porteous2001extraction}; 
    \item adapting UCT to use landmarks as subgoals during rollouts and developing a weighting parameter that balances between greedily focusing on the next landmark to achieve and the final goal; 
    \item describing the \LAMP algorithm, which uses landmark-assisted UCT rollouts to learn a landmark-sensitive $Q$-function; and,
    \item demonstrating significant performance improvements by \LAMP over standard UCT in five of six probabilistic planning benchmark domains, with the optimal weighting varying across problems.
\end{itemize}
The results from \LAMP highlight the effectiveness of landmarks for probabilistic planning. Similar to the finding that landmarks often resulted in better anytime search for LAMA \cite{richter2010joy}, our results show that using landmarks early in search can yield better solutions.  

\section{Background: classical landmarks}
\label{sec:background}

Following the definitions of \citet{ghallab2016automated}, a classical planning domain $\cpvar{\Sigma}$ is a tuple $(S, \cpvar{A}, \cpvar{\gamma})$, where $S$ and $\cpvar{A}$ are finite sets of states and actions, respectively; and $\cpvar{\gamma}:S\times\cpvar{A}\to S$ is a deterministic state-transition function. An action $a \in \cpvar{A}$ is applicable in state $s \in S$ if $\cpvar{\gamma}(s, a)$ is defined; we write $\mathrm{Applicable}(s)$ to denote the set of all applicable actions in $s$. A \textbf{plan} is a sequence of actions $\cpvar{\pi} = \langle a_1, \dots, a_n\rangle$.  We write $\cpvar{\gamma}(s, \cpvar{\pi})$ to denote the state produced by starting at $s$ and applying the actions in $\cpvar{\pi}$ in order, if all of them are applicable. Let $\cpvar{\prob} = (\cpvar{\Sigma}, s_0, g)$ be a classical planning problem, where $s_0 \in S$ is the initial state and $g$ is the goal condition.  A solution to $\cpvar{P}$ is a plan $\cpvar{\pi}$ such that $\cpvar{\gamma}(s_0, \cpvar{\pi}) \models g$.

Adopting definitions of landmarks from \citet{richter2010lama}, let $\cpvar{P} = (\cpvar{\Sigma}, s_0, g)$ be a classical planning problem and let $\cpvar{\pi}=\langle a_1, \dots, a_n\rangle$ be a plan. A condition $\varphi$ is: 
\begin{itemize}
    \item 
    \textbf{true at time $i$} in $\cpvar{\pi}$ if $\cpvar{\gamma}(s_0, \langle a_1, \dots, a_i\rangle) \models \varphi$.
    \item 
    \textbf{added at time $i$} in $\cpvar{\pi}$ if $\varphi$ is true at time $i$ but not at time $i - 1$. 
    \item 
    \textbf{first added at time $i$} in $\cpvar{\pi}$ if $\varphi$ is true at time $i$, but not at any time $j<i$. 
\end{itemize}

Then, a condition $\varphi$ is a \textbf{landmark} for $\cpvar{\prob}$ if for all plans $\cpvar{\pi}$ such that $\cpvar{\gamma}(s_0, \cpvar{\pi}) \models g$, $\varphi$ is true at some time in $\cpvar{\pi}$. 
Let $\varphi_1$ and $\varphi_2$ be conditions. In problem $\cpvar{\prob}$, there is a:
\begin{itemize}
    \item \textbf{natural ordering} $\varphi_1 \to \varphi_2$ if for all $i$, in every plan where $\varphi_2$ is true at time $i$, $\varphi_1$ is true at some time $j<i$. 
    \item \textbf{necessary ordering} $\varphi_1 \to_{\textnormal{n}} \varphi_2$ if for all $i$, in every plan where $\varphi_2$ is added at time $i$, $\varphi_1$ is true at time $i - 1$. 
    \item \textbf{greedy-necessary ordering} $\varphi_1 \to_{\textnormal{gn}} \varphi_2$ if for all $i$, in every plan where $\varphi_2$ is first added at time $i$, $\varphi_1$ is true at time $i - 1$. 
\end{itemize}
They also defined reasonable orderings \cite{richter2010lama}, but such orderings are not mandatory. They are not used in \LAMP, so we omit them here. 

\section{Probabilistic landmarks}
\label{sec:probabilistic_landmarks}

A probabilistic planning domain $\Sigma$ is a tuple $(S, A, \gamma, \Pr)$, where $S$ and $A$ are finite sets of states and actions, respectively; $\gamma: S \times A \to 2^S$ is a state transition function; and $\Pr(s' \mid s, a)$ is a distribution over $\gamma(s, a)$ giving the probability of reaching state $s'$ when executing action $a$ in state $s$.
An action $a \in A$ is applicable in state $s \in S$ if $\gamma(s, a) \neq \varnothing$; we write $\mathrm{Applicable}(s)$ to denote the set of all applicable actions. A \textbf{policy} is a partial function $\pi: S \to A$ with domain $\mathrm{Dom}(\pi) \subseteq S$ such that for all $s \in \mathrm{Dom}(\pi)$, $\pi(s) \in \mathrm{Applicable}(s)$. The policy $\pi$ is total if $\mathrm{Dom}(\pi) = S$. We will assume all actions have unit cost. 

Let $\prob = (\Sigma, s_0, g)$ be a probabilistic planning problem, where $s_0$ is the initial state and $g$ is the goal condition. Solutions to $\prob$ take the form of a policy. For a policy $\pi$ and initial state $s_0$, a \textbf{history} $\sigma$ is a finite sequence of states $\langle s_0, s_1, \dots, s_h\rangle$, starting in $s_0$, such that for all $i\in\{1,\dots,h\}$, $s_{i}\in\gamma(s_{i-1},\pi(s_{i-1}))$ and for all $i$, $((s_i \models g) \vee (s_i \notin \mathrm{Dom}(\pi))) \iff i = h$. Let $|\sigma|$ denote the length of $\sigma$, and $\sigma_{-1}$ denote the final state in $\sigma$. The probability of a history $\sigma$ of a policy $\pi$ from state $s_0$ is defined to be $\Pr(\sigma \mid \pi, s_0) = \prod_{i = 1}^{|\sigma|} \Pr(s_{i}\mid s_{i - 1}, \pi(s_{i - 1}))$. We write $H(s_0, \pi, g)$ to denote the subset of all histories of $\pi$ from $s_0$ that end in a state that satisfies $g$. A policy $\pi$ is \textbf{safe} for $P$ if $\sum_{\sigma \in H(s_0, \pi, g)} \Pr(\sigma \mid \pi, s_0) = 1$.

We extend the definition of landmarks and landmark orderings to probabilistic planning domains. Let $\prob=(\Sigma,s_0,g)$ be a probabilistic planning problem. For a policy $\pi$ and a history $\sigma = \langle s_0, \dots, s_h\rangle \in H(s_0, \pi, g)$, a condition $\varphi$ is: 
\begin{itemize}
    \item 
    \textbf{true at time $i$} in $\sigma$ if $s_i \models \varphi$. 
    \item 
    \textbf{added at time $i$} in $\sigma$ if $s_i \models \varphi$ and $s_{i - 1} \not\models \varphi$. 
    \item 
    \textbf{first added at time $i$} in $\sigma$ if $s_i \models \varphi$ and $s_{j} \not\models \varphi$ for all $j<i$. 
\end{itemize}
Then, a condition $\varphi$ is a \textbf{landmark} for $\prob$ if for all policies $\pi$ and all histories $\sigma \in H(s_0, \pi, g)$, $\varphi$ is true at some time in $\sigma$. Let $\varphi_1$ and $\varphi_2$ be conditions. In problem $\prob$, there is a:
\begin{itemize}
    \item \textbf{natural ordering} $\varphi_1 \to \varphi_2$ if for all $i$, for every policy $\pi$ and every history $\sigma \in H(s_0, \pi, g)$ where $\varphi_2$ is true at time $i$, $\varphi_1$ is true at some time $j<i$. 
    \item \textbf{necessary ordering} $\varphi_1 \to_{\textnormal{n}} \varphi_2$ if for all $i$, for every policy $\pi$ and every history $\sigma \in H(s_0, \pi, g)$ where $\varphi_2$ is added at time $i$, $\varphi_1$ is true at time $i - 1$. 
    \item \textbf{greedy-necessary ordering} $\varphi_1 \to_{\textnormal{gn}} \varphi_2$ if for all $i$, for every policy $\pi$ and every history $\sigma \in H(s_0, \pi, g)$ where $\varphi_2$ is first added at time $i$, $\varphi_1$ is true at time $i - 1$. 
\end{itemize}

We will construct probabilistic landmarks for $\prob = (\Sigma, s_0, g)$ using the \textbf{all-outcomes determinization} of $\Sigma = (S, A, \gamma, \Pr)$, which is a classical domain, denoted $\detvar{\Sigma} = (S, \detvar{A}, \detvar{\gamma})$, where each action in $\Sigma$ is replaced by a set of deterministic actions, one for each possible outcome of the original action. More formally, let $s \in S$ and $a \in \mathrm{Applicable}(s)$, and suppose $\gamma(s, a) = \{o_1, \dots, o_k\}$. We transform $a$ into a set of deterministic actions $\det(s, a) = \{d_1, \dots, d_k\}$ corresponding to each outcome in $\gamma(s, a)$, where $o_i = \detvar{\gamma}(s, d_i)$ for all $i \in \{1, \dots, k\}$. Then $\detvar{A} = \bigcup_{s \in S, a \in \mathrm{Applicable}(s)} \det(s, a)$. Landmarks in $\detvar{\Sigma}$ carry over to $\Sigma$ as follows: 

\begin{lemma}
\label{lem:1}
    Let $\prob = (\Sigma, s_0, g)$ be a probabilistic planning problem, where $\Sigma = (S, A, \gamma, \Pr)$. Let $\detvar{\prob} = (\detvar{\Sigma}, s_0, g)$ be the classical planning problem where $\detvar{\Sigma} = (S, \detvar{A}, \detvar{\gamma})$ is the all-outcomes determinization of $\Sigma$. Let $\pi$ be a policy for $P$. For every history $\sigma = \langle s_1, \dots, s_h\rangle \in H(s_0, \pi, g)$, there exists a plan $\detvar{\pi} = \langle a_1, \dots, a_h\rangle$ for $\detvar{\prob}$ such that for all $i \in \{1, \dots, h\}$, $\detvar{\gamma}(s_0, \langle a_1, \dots, a_i\rangle) = s_i$; that is,  $\detvar{\pi}$ reaches the same sequence of states as $\sigma$.
\end{lemma}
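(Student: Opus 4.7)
The plan is to build the desired plan $\detvar{\pi}$ action-by-action from $\sigma$, using the construction of the all-outcomes determinization to justify each action choice, and then finish with a short induction to confirm that applying the plan from $s_0$ traces out exactly the state sequence in $\sigma$.

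First I would unpack what $\sigma\in H(s_0,\pi,g)$ gives us. By definition of a history, for each $i\in\{1,\dots,h\}$ we have $\pi(s_{i-1})\in\mathrm{Applicable}(s_{i-1})$ and $s_i\in\gamma(s_{i-1},\pi(s_{i-1}))$. Writing $\gamma(s_{i-1},\pi(s_{i-1}))=\{o_1,\dots,o_k\}$, there is some index $j_i$ with $o_{j_i}=s_i$. Now I invoke the construction of $\detvar{\Sigma}$: for each pair $(s_{i-1},\pi(s_{i-1}))$, the set $\det(s_{i-1},\pi(s_{i-1}))=\{d_1,\dots,d_k\}$ contains a deterministic action $a_i\coloneqq d_{j_i}$ satisfying $\detvar{\gamma}(s_{i-1},a_i)=o_{j_i}=s_i$. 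Note $a_i\in\detvar{A}$ by the definition $\detvar{A}=\bigcup_{s\in S,\,a\in\mathrm{Applicable}(s)}\det(s,a)$.

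Define $\detvar{\pi}\coloneqq\langle a_1,\dots,a_h\rangle$. I then prove by induction on $i\in\{0,1,\dots,h\}$ that $\detvar{\gamma}(s_0,\langle a_1,\dots,a_i\rangle)=s_i$, where the empty-prefix case $i=0$ is trivially $s_0=s_0$. For the inductive step, assume $\detvar{\gamma}(s_0,\langle a_1,\dots,a_{i-1}\rangle)=s_{i-1}$; then $a_i$ is applicable in $s_{i-1}$ by construction, and $\detvar{\gamma}(s_{i-1},a_i)=s_i$, so extending the prefix by $a_i$ lands in $s_i$. This yields the claimed equality for every $i$ and, in particular, shows that every $a_i$ in the plan is applied to a state in which it is applicable, so $\detvar{\pi}$ is a well-defined plan in $\detvar{\prob}$.

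I do not anticipate a serious obstacle: the lemma is essentially a bookkeeping statement that the all-outcomes determinization retains, as deterministic actions, every branch that the stochastic dynamics could have taken. The only place to be careful is the indexing --- making sure the chosen determinized action for step $i$ is selected based on the \emph{realized} successor $s_i$ in $\sigma$, not on some other outcome of $\pi(s_{i-1})$ --- which the construction above handles explicitly. Nothing about the probability distribution $\Pr$ is used, which is exactly what we should expect, since only the support of $\gamma(s,a)$ matters for identifying the corresponding deterministic actions in $\detvar{A}$.
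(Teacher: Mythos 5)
Your proposal is correct and follows essentially the same argument as the paper's proof: for each step, pick the deterministic action in $\det(s_{i-1}, \pi(s_{i-1}))$ corresponding to the realized successor $s_i$, then conclude by induction that the resulting plan traces the same state sequence. Your version simply spells out the indexing and applicability bookkeeping a bit more explicitly.
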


\begin{proof}
    Let $\sigma = \langle s_0, \dots, s_h \rangle \in H(s_0, \pi, g)$ be a history. For all $i \in \{1, \dots, h\}$, $s_i \in \gamma(s_{i - 1}, \pi(s_{i - 1}))$, thus there exists $d_i \in \det(s_{i - 1}, \pi(s_{i - 1})) \subseteq \detvar{A}$ where $s_i = \detvar{\gamma}(s_{i - 1}, d_i)$. 
    By induction, it follows that for all $i \in \{1, \dots, h\}$, $\detvar{\gamma}(s_0, \langle d_1, \dots, d_i\rangle) = s_i$, so $\detvar{\pi} = \langle d_1, \dots, d_h\rangle$ reaches the same sequence of states as $\sigma$. 
\end{proof}

\begin{theorem}
    Let $\prob = (\Sigma, s_0, g)$ be a probabilistic planning problem, where $\Sigma = (S, A, \gamma, \Pr)$. Let $\detvar{\prob} = (\detvar{\Sigma}, s_0, g)$ be the classical planning problem where $\detvar{\Sigma}$ is the all-outcomes determinization of $\Sigma$. If $\varphi$ is a landmark for $\detvar{\prob}$, then $\varphi$ is a landmark for $\prob$. 
\end{theorem}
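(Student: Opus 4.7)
The plan is to argue by contrapositive, leveraging Lemma~\ref{lem:1} to transport a counterexample history in $\prob$ into a counterexample plan in $\detvar{\prob}$. Concretely, I would assume $\varphi$ is not a landmark for $\prob$ and derive that $\varphi$ is not a landmark for $\detvar{\prob}$.

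First, from the assumption that $\varphi$ is not a landmark for $\prob$, I would unfold the definition to obtain a policy $\pi$ for $\prob$ and a history $\sigma = \langle s_0, \dots, s_h\rangle \in H(s_0, \pi, g)$ such that $\varphi$ is never true along $\sigma$, i.e., $s_i \not\models \varphi$ for every $i \in \{0, \dots, h\}$. In particular, $s_h \models g$ since $\sigma$ ends in a goal-satisfying state.

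Next, I would invoke Lemma~\ref{lem:1} on $\sigma$ to obtain a plan $\detvar{\pi} = \langle d_1, \dots, d_h\rangle$ in $\detvar{\prob}$ with $\detvar{\gamma}(s_0, \langle d_1, \dots, d_i\rangle) = s_i$ for every $i$. Since $s_h \models g$, this plan $\detvar{\pi}$ is a solution to $\detvar{\prob}$. But the states visited by $\detvar{\pi}$ are exactly $s_0, \dots, s_h$, none of which satisfy $\varphi$. Hence $\varphi$ is not true at any time in $\detvar{\pi}$, contradicting the assumption that $\varphi$ is a landmark for $\detvar{\prob}$.

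There is no real obstacle here; the statement is essentially an immediate corollary of Lemma~\ref{lem:1}, so the ``main step'' is just being careful that the reached states of $\detvar{\pi}$ coincide with those of $\sigma$ (including the final goal-satisfying state), which Lemma~\ref{lem:1} provides directly. The only minor bookkeeping concern is whether to treat the initial state $s_0$ separately when checking $\varphi$; this is handled by noting that $s_0$ appears in both traces by construction, so if $s_0 \models \varphi$ then $\varphi$ would already be true in both $\sigma$ and $\detvar{\pi}$, and otherwise the inductive correspondence from Lemma~\ref{lem:1} handles the rest.
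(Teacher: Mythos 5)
Your proposal is correct and uses the same key ingredient as the paper: Lemma~\ref{lem:1} to transport a goal-reaching history of $\prob$ into a plan for $\detvar{\prob}$ visiting exactly the same states, so that truth of $\varphi$ transfers between the two. The only difference is that you argue by contrapositive while the paper argues directly (take any history, get a corresponding plan, and pull back the time at which the landmark holds), which is a logically equivalent phrasing of the same argument.
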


\begin{proof}
    Let $\pi$ be a policy for $\prob$, and let $\sigma \in H(s_0, \pi, g)$. By Lemma~\ref{lem:1}, there exists a plan $\detvar{\pi}$ that reaches the same states as $\sigma$. Suppose $\varphi$ is a landmark for $\detvar{\prob}$. Then $\varphi$ must be true at some time in $\detvar{\pi}$, and so it follows that $\varphi$ is also true at some time in $\sigma$. 
\end{proof}

\noindent
Moreover, landmark orderings also carry over from the all-outcomes determinization to the probabilistic domain.
\begin{theorem}
    Let $\prob = (\Sigma, s_0, g)$ be a probabilistic planning problem, where $\Sigma = (S, A, \gamma, \Pr)$. Let $\detvar{\prob} = (\detvar{\Sigma}, s_0, g)$ be the classical planning problem where $\detvar{\Sigma} = (\detvar{S}, \detvar{A}, \detvar{\gamma})$ is the all-outcomes determinization of $\Sigma$. Let $\varphi_1$ and $\varphi_2$ be conditions. 
    \begin{enumerate}[leftmargin=.65cm,label=(\arabic*)]
        \item If $\varphi_1 \to \varphi_2$ in $\detvar{\prob}$, then $\varphi_1 \to \varphi_2$ in $\prob$.
        \item If $\varphi_1 \to_{\textnormal{n}} \varphi_2$ in $\detvar{\prob}$, then $\varphi_1 \to_{\textnormal{n}} \varphi_2$ in $\prob$.
        \item If $\varphi_1 \to_{\textnormal{gn}} \varphi_2$ in $\detvar{\prob}$, then $\varphi_1 \to_{\textnormal{gn}} \varphi_2$ in $\prob$.
    \end{enumerate}
\end{theorem}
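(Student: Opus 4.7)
The plan is to handle all three parts in a uniform way, by using Lemma~\ref{lem:1} to transport any probabilistic history into a classical plan over the same state sequence, and then invoking the corresponding ordering in $\detvar{\prob}$. The three predicates involved in the orderings---``true at time $i$'', ``added at time $i$'', and ``first added at time $i$''---are all state-based: they depend only on which of the states $s_0,\dots,s_h$ do or do not satisfy $\varphi_1$ or $\varphi_2$. Since Lemma~\ref{lem:1} produces a plan $\detvar{\pi}$ that reaches exactly the same sequence of states as $\sigma$, each of these predicates transfers between $\sigma$ and $\detvar{\pi}$ literally by definition.

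First I would fix a policy $\pi$ for $\prob$, a history $\sigma=\langle s_0,\dots,s_h\rangle\in H(s_0,\pi,g)$, and an index $i$ witnessing the relevant hypothesis on $\varphi_2$ in $\sigma$. Then I would apply Lemma~\ref{lem:1} to obtain a plan $\detvar{\pi}=\langle a_1,\dots,a_h\rangle$ with $\detvar{\gamma}(s_0,\langle a_1,\dots,a_k\rangle)=s_k$ for every $k$. Because $s_h\models g$, this plan is a solution to $\detvar{\prob}$, which is the condition needed to invoke any of the hypothesized orderings.

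Next, for each of the three cases, I would verify that the corresponding condition on $\varphi_2$ transfers to $\detvar{\pi}$: for (1), $s_i\models\varphi_2$ immediately yields $\varphi_2$ true at time $i$ in $\detvar{\pi}$; for (2), $s_i\models\varphi_2$ together with $s_{i-1}\not\models\varphi_2$ yields $\varphi_2$ added at time $i$ in $\detvar{\pi}$; for (3), $s_i\models\varphi_2$ and $s_j\not\models\varphi_2$ for all $j<i$ yields $\varphi_2$ first added at time $i$ in $\detvar{\pi}$. Applying the respective ordering in $\detvar{\prob}$ then gives either an index $j<i$ (for (1)) or the index $i-1$ (for (2) and (3)) at which $\varphi_1$ is true in $\detvar{\pi}$, which---again because the state sequences coincide---means $\varphi_1$ is true at that index in $\sigma$. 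Quantifying over $\pi$, $\sigma$, and $i$ finishes each case.

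I do not anticipate a genuine obstacle: the only step requiring a moment of care is ensuring that the plan obtained from Lemma~\ref{lem:1} is actually a solution plan of $\detvar{\prob}$ (so that the hypothesis on $\detvar{\prob}$ applies), but this is immediate from $\sigma\in H(s_0,\pi,g)$. The remaining work is purely bookkeeping over the three definitions of ``true/added/first added at time $i$''.
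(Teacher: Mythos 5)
Your proposal is correct and follows essentially the same route as the paper's proof: invoke Lemma~\ref{lem:1} to obtain a plan reaching the same state sequence as the history, transfer the ``true/added/first added'' condition on $\varphi_2$ to that plan, apply the ordering in $\detvar{\prob}$, and transfer $\varphi_1$'s truth back to the history. The extra check that the plan solves $\detvar{\prob}$ is harmless but not even needed, since the paper's classical ordering definitions quantify over all plans rather than only solution plans.
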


\begin{proof}
    Proofs for (1), (2), and (3) are similar. To prove (1) (resp. (2); (3)), suppose $\varphi_1$ and $\varphi_2$ are landmarks in $\detvar{\prob}$, with $\varphi_1 \to \varphi_2$ (resp. $\varphi_1 \to_{\textnormal{n}} \varphi_2$; $\varphi_1 \to_{\textnormal{gn}} \varphi_2$). Let $\pi$ be a policy for $\prob$, let $\sigma \in H(s_0, \pi, g)$, and suppose $\varphi_2$ is true (resp. added; first added) at time $i$ in $\sigma$.  By Lemma~\ref{lem:1}, there exists a plan $\detvar{\pi}$ that reaches the same sequence of states as $\sigma$, so $\varphi_2$ is also true (resp. added; first added) at time $i$ in $\detvar{\pi}$. Then, $\varphi_1$ is true at some time $j<i$ (resp. time $i - 1$; time $i - 1$) in $\detvar{\pi}$. Thus, $\varphi_1$ is also true at time $j$ (resp. $i - 1$; $i - 1$) in $\sigma$. 
\end{proof}

\noindent Therefore, to generate landmarks for a probabilistic planning problem $\prob = (\Sigma, s_0, g)$, we can use existing classical landmark extraction algorithms, like $\mathrm{LM}^{\text{RHW}}$ \cite{richter2008landmarks}, to generate classical landmarks for the all-outcomes determinized planning problem $\detvar{\prob} = (\detvar{\Sigma}, s_0, g)$, and directly use those landmarks and their orderings in $\prob$. 

\subsection{Landmarks as subgoals}

Before we introduce \LAMP (\secref{sec:algorithm}), we will discuss the expected benefit that landmarks might provide while also considering some subtleties of using landmarks to solve MDPs that make this a nontrivial task. In classical and probabilistic planning, subgoals can be used to decompose large planning problems into smaller, easier-to-solve subproblems. A domain-independent approach to generating subgoals is to compute landmarks and use them as subgoals. We can decompose a planning task into subtasks, each with the goal of achieving one landmark, then combine their solutions, a procedure inspired by \citet{hoffmann2004ordered}. However, a straightforward sequential approach to achieving landmarks is not guaranteed to succeed, even if a solution exists for the original problem.  A key limitation is that achieving one landmark can lead to a state from which future landmarks, or the final goal itself, are unreachable. We will elaborate on this point later.

Let $\cpvar{\prob} = (\cpvar{\Sigma}, s_0, g)$ be a classical planning problem, let $\langle \varphi_1, \dots, \varphi_k\rangle$ be a sequence of landmarks, and suppose $\varphi_k \equiv g$. For each $i \in \{1, \dots, k\}$, suppose there exists a solution plan $\cpvar{\pi}^i$ for the problem $(\cpvar{\Sigma}, s_{i - 1}, \varphi_i)$, where $s_{i - 1} = \cpvar{\gamma}(s_0, \cpvar{\pi}^1 \circ \cdots \circ \cpvar{\pi}^{i - 1})$ for $i>1$. Then, a solution to $\cpvar{\prob}$ can be obtained by concatenating $\cpvar{\pi}^1 \circ \dots \circ \cpvar{\pi}^k$. Note that this solution depends on the choice of sequence $\langle \varphi_1, \dots, \varphi_k\rangle$ as well as the plans $\cpvar{\pi}^i$.

A similar approach can be applied to probabilistic planning problems. To illustrate, let $\prob = (\Sigma, s_0, g)$ be a probabilistic planning problem and let $\langle \varphi_1, \dots, \varphi_k\rangle$ be a sequence of landmarks where $\varphi_k \equiv g$. For all $i \in \{1, \dots, k\}$, suppose there exists a safe policy $\pi_i$ which achieves $\varphi_i$ from all states $s_{i - 1} \models \varphi_{i - 1}$ reachable by running $\pi_{i - 1}$, or from $s_0$, if $i = 1$. We can obtain a solution to $\prob$ by running policies $\pi_1, \dots, \pi_k$ in sequence (see Appendix \ref{appendix:sequential} for more details). This procedure again depends on the choice of sequence $\langle \varphi_1, \dots, \varphi_k\rangle$ and policies $\pi_i$.

In the ideal best-case, the landmark-based approach could yield up to an exponential speed-up in plan time. Consider an unbounded search space with branching factor $b$ and a goal $g$ at depth $d$ from the initial state $s_0$. A breadth-first planner would take $\mathcal{O}(b^d)$ time to reach the goal. However, suppose there exists a sequence of landmarks $\langle \varphi_1, \dots, \varphi_k\rangle$ where $\varphi_k \equiv g$, $\varphi_1$ is at depth $d/k$ from $s_0$, and for all $i \in \{1, \dots, k - 1\}$, if $s_{i} \models \varphi_{i}$, then $\varphi_{i + 1}$ is at depth $d / k$ from $s_{i}$. In this best case, a breadth-first planner takes $\mathcal{O}(b^{d/k})$ to achieve each landmark. With $k$ landmarks, this reduces the total planning time to $\mathcal{O}(kb^{d/k})$, an exponential speed-up (see Appendix \ref{appendix:sequential} Figure \ref{fig:speedup}). 

However, the existence of such policies for each landmark is not guaranteed. This limitation underpins the incompleteness of this sequential landmark achievement approach: even when a safe policy for the original planning problem exists, this approach might fail to reach the top-level goal $g$. These failures can occur in two situations: the current, chosen landmark is unreachable, or a deadlock state is encountered from which the final goal is unreachable \cite{hoffmann2004ordered} (see Appendix \ref{appendix:sequential} Figure \ref{fig:locks} for an example). 

These failures stem from the approach's inherent myopia: by focusing on achieving landmarks, the planner can be led to states that hinder or prevent further progress towards future landmarks or the final goal. In the classical setting, \citet{hoffmann2004ordered} proposed a ``safety net'' to address the issue of unreachable landmarks: in the event that an unreachable landmark is selected, the planner reverts to a base planner that seeks to achieve the final goal, disregarding the landmark graph. In the probabilistic setting, we aim to mitigate these failure modes with an approach that balances the pursuit of landmarks with the pursuit of the final goal by tuning a greediness parameter (\secref{sec:choosing_actions}). 

\section{Landmark-assisted Monte Carlo planning}
\label{sec:algorithm}

Intuitively, using landmarks during MDP planning is straightforward. Algorithm~\ref{alg:landmarkplan} shows a simple procedure, called \LandmarkPlan, that first computes a collection of landmarks $\Phi$ (Line 2) and then iteratively selects a landmark (Lines 5--7) and actions to achieve that landmark (Lines 8--10).  The behavior of \LandmarkPlan depends largely on how selection on Lines 7 and 9 are implemented. 

Algorithm \ref{alg:lamp} shows \LAMP, the main algorithm of this paper, which is a specific implementation of \LandmarkPlan based on Monte Carlo tree search. Like \LandmarkPlan, it has stages for selecting landmarks (Lines 7--9) and actions (Lines 10--12).  \LAMP learns $Q$-functions (Lines 5--6) to predict the best landmark ($Q_{LM}$), the best action for the current landmark ($Q_{\varphi}$), and the best action for the final goal ($Q_g$). Crucially, \LAMP uses a greediness parameter $\alpha$ to achieve a balance between greedy actions to achieve the current landmark with actions that advance toward the top-level goal (Line 11). If a dead-end is encountered, the algorithm fails. 

Section~\ref{sec:results} will demonstrate that \LAMP performs well in several benchmark problems and that the best $\alpha$ is problem dependent; the results present strong evidence that landmarks can be helpful for solving MDPs. More generally, landmarks can be incorporated into MDP algorithms in many ways; \LAMP is one instance in a family based on \LandmarkPlan which uses UCT \cite{kocsis2006bandit}, a simple, well-established planning algorithm that allows us to easily modulate the amount of work done to learn a policy and examine the effect of using landmarks. Future work should explore other variations of \LandmarkPlan, including non-sampling-based approaches.

Next, we describe the key components of \LAMP in determining: how to choose a landmark (\secref{sec:lamp-choose-landmark}), how to choose an action (\secref{sec:choosing_actions}), and how to learn the $Q$-functions (\secref{sec:lamp-uct}).

\begin{algorithm}[t]
    \caption{A general procedure for landmark-guided planning. }
    \label{alg:landmarkplan}
    \begin{algorithmic}[1]
\Procedure{LandmarkPlan}{$P$}
   \State $\Phi \gets \mathsc{LandmarkGraph}(\prob) \cup \{g\}$
   \State $s \gets s_0$; $\varphi \gets \mathsc{nil}$
   \While{$\Phi \neq \varnothing$ and $s \not\models g$}
      \If{$\varphi = \mathsc{nil}$ or $s \models \varphi$}  $\>$ \Comment{select landmark}
         \State $\Phi \gets \Phi \backslash \{\varphi\}$
         \State select $\varphi \in \mathrm{leaves}(\Phi)$
      \Else  \Comment{select action}
         \State select $a \in \mathrm{Applicable}(s)$ to achieve $\varphi$
         \State $s \gets \mathsc{apply}(s, a)$
      \EndIf
   \EndWhile
\EndProcedure
\end{algorithmic}
\end{algorithm}
\begin{algorithm}[t]
    \caption{The \LAMP algorithm, a particular implementation of \LandmarkPlan. $\prob = (\Sigma, s_0, g)$ is a planning problem, $\var{n\_rollouts}$ is the number of rollouts, $\var{budget}$ is a cost limit, $\var{depth}$ is the maximum rollout depth, and $\alpha$ is the greediness parameter. }
    \label{alg:lamp}
    \begin{algorithmic}[1]
   \Procedure{LAMP}{$\prob, \var{n\_rollouts}, \var{budget}, \var{depth}, \alpha$}
      \State $\Phi \gets \mathsc{LandmarkGraph}(\prob) \cup \{g\}$
      \State $\varphi \gets \mathsc{nil}$; $s \gets s_0$; $\var{cost} \gets 0$
      \While{$\Phi \neq \varnothing$ and $s \not\models g$ and $\var{cost} < \var{budget}$}
      \For{$i \gets 1, \dots, \var{n\_rollouts}$}  \Comment{learn $Q_{LM}$,  $Q_{\varphi}$, $Q_g$}
         \State $\mathsc{Rollout}(s, \varphi, \Phi, \var{depth}, \alpha, cost)$
      \EndFor
         \If{$\varphi = \mathsc{nil}$ or $s \models \varphi$} \Comment{select landmark}
            \State $\Phi \gets \Phi \backslash \{\varphi\}$
            \State $\displaystyle{\varphi \gets \argmax_{\varphi' \in \mathrm{leaves}(\Phi)} (Q_{LM}(\Phi, \varphi'))}$ 
         \Else \Comment{select action}
            \State $\displaystyle{a \gets \argmax_{a \in \mathrm{Applicable}(s)}(\alpha Q_\varphi(s, a) +(1 - \alpha)Q_g(s, a))}$
            \State $s \gets \mathsc{apply}(s, a)$
         \EndIf
         \State $\var{cost} \gets \var{cost} + 1$
      \EndWhile
\EndProcedure
\end{algorithmic}

\end{algorithm}

\subsection{Choosing landmarks in \LAMP}
\label{sec:lamp-choose-landmark}

To learn an ordering of the landmarks, we frame the selection of landmarks as a planning problem. Let $\prob = (\Sigma, s_0, g)$ be a probabilistic planning problem and let $\Phi$ be a collection of landmarks for $\prob$. Let $\Sigma_\Phi$ be a planning domain with state space $S_\Phi = 2^\Phi$, action space $A_\Phi = \Phi$, and state-transition function $\gamma_\Phi: (s, a) \mapsto s \backslash \{a\}$ if $a \in s$. Actions in this domain are deterministic, meaning $\Pr(s' \mid s, a) = \mathbf{1}_{\gamma_\Phi(s, a)}(s')$. However, these actions do not have unit cost. Instead, the cost of executing $a \in A_\Phi$ is determined by the cost of running policy $\pi_a$ in $\Sigma$ to achieve landmark $a$. This cost depends not only on the action $a \in A_\Phi$ and the current state $s \in S_\Phi$, but also the state in $\Sigma$ from which $\pi_a$ is initiated, meaning that the cost also depends on the history of previously executed actions. 

To further restrict the set of applicable actions in $\Sigma_\Phi$, we can impose a strict partial order $\prec$ on $\Phi$ using landmark orderings. Then, for any $\Phi' \subseteq \Phi$, we can view $\Phi'$ as a directed acyclic graph and define 
\begin{equation}
    \mathrm{leaves}(\Phi') = \{\varphi \in \Phi' \mid \varphi \text{ has no $\prec$-predecessors}\}.
\end{equation}
We can then restrict $\mathrm{Applicable}(s) = \mathrm{leaves}(s)$. We will also assume $g \in \Phi$, with $\varphi \prec g$ for all $\varphi \in \Phi \backslash \{g\}$ to ensure the top-level goal $g$ is achieved. Then, landmark selection can be viewed as a planning problem $\prob_\Phi = (\Sigma_\Phi, \Phi, g_\Phi)$, where $s \models g_\Phi$ iff $s = \varnothing$. 

\LAMP's landmark selection bears some similarity to the options framework in hierarchical reinforcement learning \cite{sutton1999between}. That is, $\Sigma_\Phi$ is semi-Markov. To see why, consider an option $( \mathcal{I}, \dot{\pi}, \beta )$ which consists of an initiation set $\mathcal{I} \subseteq S$, a policy $\dot{\pi} : S \times A \to [0, 1]$, and a termination condition $\beta: S \to [0, 1]$. A total, safe policy $\pi_a$ for landmark $a \in A_\Phi$ can be viewed as an option $\langle\mathcal{I}, \dot{\pi}, \beta\rangle_a$ with initiation set $\mathcal{I} = S$, policy $\dot{\pi} = \mathbf{1}_{[\pi_a(s') = a']}(s', a')$, and termination condition $\beta = \mathbf{1}_{[s' \models a]}(s')$.  Since options are known to be semi-Markov by \citet{sutton1999between}, we know that $\Sigma_\Phi$ is semi-Markov, meaning that landmarks offer a similar benefit to options in providing temporal abstractions to decompose a problem. 

\subsection{Choosing actions in \LAMP}
\label{sec:choosing_actions}

To solve $\prob$ using the approach outlined in Algorithm~\ref{alg:landmarkplan}, we must solve two planning problems concurrently: the landmark selection problem in $\Sigma_\Phi$ (Algorithm~\ref{alg:lamp} Lines~7--9) and the action selection problem in $\Sigma$ (Algorithm~\ref{alg:lamp} Lines~10--12). For our implementation of \LAMP, we use UCT for both landmark selection and action selection, but we note that other techniques can be applied. 

A greedy approach is one in which selection on Line 11 is done solely with respect to the current landmark $\varphi$, without regard for the top-level goal $g$. While this reduces the complexity of the planning task, it may compromise optimality. We can see this demonstrated in the example in Figure~\ref{fig:greedy}. Suppose we want to achieve landmark $\varphi$ with top-level goal $g$. Assuming actions are deterministic, we can achieve $\varphi$ by either executing $\langle a_1, a_2\rangle$ or $\langle a_4\rangle$. Although $\langle a_1, a_2\rangle$ leads to a shorter solution for the top-level goal, the greedy planner favors $\langle a_4\rangle$ because it achieves the landmark $\varphi$ more efficiently.

To address this, we adopt a hybrid approach in \LAMP that balances rewards for both $\varphi$ and $g$. With a greediness parameter $\alpha \in [0, 1]$, suppose $Q_g$ and $Q_\varphi$ are action value functions with respect to $g$ and $\varphi$, respectively. Rather than using the greedy approach $\argmax_{a\in \mathrm{Applicable}(s)} Q_\varphi(s, a)$, the balanced approach uses 
\begin{equation}
    \label{eqn:greediness}
    \argmax_{a \in \mathrm{Applicable}(s)} \left(\alpha Q_\varphi(s, a) + (1 - \alpha)Q_g(s, a)\right)
\end{equation}
(cf. Algorithm~\ref{alg:lamp} Line~11). In Figure~\ref{fig:greedy}, suppose $Q_g(s_0, a_1) = \frac{1}{4}$, $Q_g(s_0, a_4) = \frac{1}{8}$, $Q_\varphi(s_0, a_1) = \frac{1}{2}$, $Q_\varphi(s_0, a_4) = 1$. So $\alpha>\frac{1}{5}$ selects action $a_4$, while $\alpha < \frac{1}{5}$ selects action $a_1$. Random selection is used to break ties. 

\begin{figure}[t]
    \centering \begin{tikzpicture}[scale=1.5]
    \node[circle, draw=black,minimum size=0.6cm,inner sep=0pt] (r1) at (0, 0) {\normalsize{$s_0$}};
    \node[circle, draw=black,minimum size=0.6cm,inner sep=0pt] (s1) at (1, .5) {\normalsize{$s_1$}};
    \node[circle, draw=black,minimum size=0.6cm,inner sep=0pt] (s2) at (2, .5) {\normalsize{$s_2$}};
    \node[circle, draw=black,minimum size=0.6cm,inner sep=0pt] (s3) at (.75, -.5) {\normalsize{$s_3$}};
    \node[circle, draw=black,minimum size=0.6cm,inner sep=0pt] (s4) at (1.5, -.5) {\normalsize{$s_4$}};
    \node[circle, draw=black,minimum size=0.6cm,inner sep=0pt] (s5) at (2.25, -.5) {\normalsize{$s_5$}};
    \node[circle, draw=black,minimum size=0.6cm,inner sep=0pt] (g1) at (3, 0) {\normalsize{$s_g$}};
    \draw [-latex] (r1) to node[above] {\normalsize{$a_1$}} (s1) ;
    \draw [-latex] (r1) to node[below, xshift=-.25em, yshift=.125ex] {\normalsize{$a_4$}} (s3);
    \draw [-latex] (s1) to node[above] {\normalsize{$a_2$}} (s2);
    \draw [-latex] (s3) to node[below] {\normalsize{$a_5$}} (s4);
    \draw [-latex] (s2) to node[above] {\normalsize{$a_3$}} (g1);
    \draw [-latex] (s4) to node[below] {\normalsize{$a_6$}} (s5);
    \draw [-latex] (s5) to node[below] {\normalsize{$a_7$}} (g1);
    \node[draw=black,dashed,inner sep=5pt,rounded corners=.3cm,rotate fit=38.66,fit=(s2)(s3)] {};
    \node at (s3) [xshift=-.5cm, yshift=-.5cm] {$\varphi$};
\end{tikzpicture}
    \caption{Suppose $\varphi$ is a landmark such that $s_2 \models \varphi$ and $s_3 \models \varphi$. Starting at $s_0$, a greedy algorithm may achieve $\varphi$ by performing action $a_4$, leading to a worse solution.}
    \label{fig:greedy}
\end{figure}
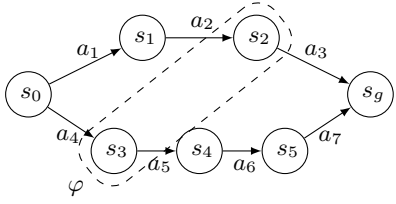

\subsection{Learning $Q$-functions for \LAMP}
\label{sec:lamp-uct}

To learn these $Q$-values, \LAMP uses an approach based on Monte Carlo tree search (MCTS). MCTS is a general approach to Markov decision processes that has garnered considerable attention after its noteworthy success at computer Go \cite{silver2016mastering}. It is often applied in domains with a large branching factor where it may be difficult or impossible to explore the entire search tree. MCTS combines a tree search with Monte Carlo rollouts and uses the outcome of these rollouts to evaluate states in a search tree. 

MCTS iteratively builds a search tree using rollouts from the current state. Within the family of MCTS algorithms, Upper Confidence Bound applied to Trees (UCT) is the most widely used in practice \cite{kocsis2006bandit}. It uses a solution to the multi-armed bandit problem \cite{auer2002finite} that provides a principled balance between exploration and exploitation when selecting nodes to expand. 

UCT planning traditionally uses reward signals from rollouts to iteratively update an action value function $Q$, where $Q(s, a)$ is an approximation of the expected reward of executing action $a$ in state $s$. To tailor UCT for our setting of goal-directed MDPs, we use GUBS (Goals with Utility-Based Semantic), a criterion for solving SSPs with dead-ends which provides a principled tradeoff between maximizing the probability of reaching the goal and minimizing expected cost with a utility-based model \cite{freire2017gubs, freire2019exact}. The utility function $U$ captures a smooth tradeoff between goal achievement and cost, where the utility of a history $\sigma$ is 
\begin{equation}
    U(\sigma) = u(|\sigma|) + K_g \mathbf{1}_{[\sigma_{-1} \models g]},
\end{equation}
$u$ is a strictly decreasing utility function over cost, and $K_g$ is a constant utility for reaching the goal $g$. 

By using UCT with the GUBS criterion, $Q$-values estimate expected utility as specified by the GUBS utility function \cite{crispino2024uctgubs}. This allows it to maintain the exploration-exploitation balance of UCT while properly handling scenarios with unavoidable dead ends---cases where conventional expected cost minimization breaks down.

In \LAMP, we further adapt UCT to learn action values for not only the top-level goal $g$, but also each landmark $\varphi \in \Phi$, as well as to learn as landmark-values. 
\begin{itemize}
    \item $Q_g(s, a)$ estimates the expected utility of executing action $a$ in state $s$ for top-level goal $g$
    \item $Q_\varphi(s, a)$ estimates the expected utility of executing action $a$ in state $s$ for landmark $\varphi$
    \item $Q_{LM}(\Phi, \varphi)$ estimates the expected utility of first achieving landmark $\varphi \in \Phi$ for top-level goal $g$.
\end{itemize}
We also maintain corresponding values for $N_g$, $N_\varphi$, and $N_{LM}$, where $N(s, a)$ is the number of times $a$ was selected in state $s$, and $N(s)$ is the number of times $s$ was visited. 

\begin{algorithm}[t]
    \caption{The Monte Carlo rollout procedure for \LAMP. $s$ is the current state, $\varphi$ is the current landmark, $\Phi$ is the current landmark graph, $d$ is the remaining rollout depth, $\alpha$ is the greediness parameter,  $c$ is the cumulative cost, $u$ is a utility function over cost, and $K_g$ is the goal-utility constant. $Q_{LM}$, $N_{LM}$, $Q_{\varphi}$, $N_{\varphi}$, $Q_{g}$, and $N_{g}$ are maps with default value 0. \textsc{Rollout} returns a tuple $(\lambda_\varphi, \beta_\varphi, \lambda_g, \beta_g)$, where $\lambda_\varphi$ and $\lambda_g$ are rollout costs for $\varphi$ and $g$, respectively, and $\beta_\varphi$ and $\beta_g$ are booleans indicating the achievement of $\varphi$ and $g$, respectively. The standard UCT rollout procedure is shown in gray and modifications or additions are emphasized in black. }
    \label{alg:rollout}
    \begin{algorithmic}[1]
\PROCEDURE{Rollout}{$s, \textcolor{black}{\varphi}, \textcolor{black}{\Phi}, d, \textcolor{black}{\alpha}, c$}
   \IIF{\textcolor{black}{$\Phi = \varnothing$}} \Return $(\textcolor{black}{0, \mathsf{true}}, 0, \mathsf{true})$ 
   \ENDIIF  
   \If{$\varphi =  \mathsc{nil}$ or $s \models \varphi$}
      \State $\varphi' \gets \underset{\varphi' \in \mathrm{leaves}(\Phi) \backslash \{\varphi\}}{\mathrm{arg\;max}}\mathsc{UCB1}(Q_{LM}, N_{LM}, \Phi, \varphi')$
      \State $(\lambda_\varphi, \beta_\varphi, \lambda_g, \beta_g) \gets \mathsc{Rollout}(s, \varphi', \Phi \backslash \{\varphi\}, d, \alpha, c)$
      \State $\mathsc{UCB-Update}(Q_{LM}, N_{LM}, \Phi, \varphi, \lambda_g + c, \beta_g)$
      \State \Return $(0, \mathsf{true}, \lambda_g, \beta_g)$
   \EndIf
   \IIF{$d = 0$ or $\mathrm{Applicable}(s) = \varnothing$} \Return $\left(\textcolor{black}{d, \mathsf{false}}, d, \mathsf{false}\right)$ 
   \ENDIIF

   \STATE{\(\begin{multlined}a \gets \underset{a \in \mathrm{Applicable}(s)}{\mathrm{arg\;max}}(\textcolor{black}{\alpha\mathsc{UCB1}(Q_\varphi, N_\varphi, s, a) +}\\[-2ex]\hspace{3.365cm} \textcolor{black}{(1 - \alpha)}\mathsc{UCB1}(Q_g, N_g, s, a))\end{multlined}\)}
   \STATE{$s' \gets \mathsc{simulate}(s, a)$}
   \STATE{$(\textcolor{black}{\lambda_\varphi, \beta_\varphi}, \lambda_g, \beta_g) \gets \mathsc{Rollout}(s', \textcolor{black}{\varphi}, \textcolor{black}{\Phi}, d - 1, \textcolor{black}{\alpha}, c + 1)$}
   \STATE{\textcolor{black}{$\lambda_\varphi \gets \lambda_\varphi + 1$; }$\lambda_g \gets \lambda_g + 1$}
   \State $\mathsc{UCB-Update}(Q_\varphi, N_\varphi, s, a, \lambda_\varphi + c, \beta_\varphi)$
   \STATE{$\mathsc{UCB-Update}(Q_g, N_g, s, a, \lambda_g + c, \beta_g)$}
   \STATE{\Return $(\textcolor{black}{\lambda_\varphi, \beta_\varphi,} \lambda_g, \beta_g)$}
\ENDPROCEDURE

\PROCEDURE{UCB-Update}{$Q, N, s, a, \lambda, \beta$}
    \STATE{$Q(s, a) \gets \frac{N(s, a)Q(s, a) + u(\lambda) + K_g\mathbf{1}_{[\beta]}}{1 + N(s, a)}$}
    \STATE{$N(s) \gets N(s) + 1$; $N(s, a) \gets N(s, a) + 1$}
\ENDPROCEDURE
\end{algorithmic}

\end{algorithm}

We modified the UCT rollouts to learn these $Q$-values simultaneously (Algorithm~\ref{alg:rollout}). In standard UCT, during each rollout, actions are selected using UCB1 values for each state-action pair, where 
\begin{equation}
   \textstyle{ \mathrm{UCB1}(Q, N, s, a) = Q(s, a) + C\sqrt{\frac{\log(N(s))}{N(s, a)}}}
\end{equation}
and $C$ is the exploration constant. In state $s$, the action that yields the highest $\mathrm{UCB1}$-value is selected. However, in Algorithm~\ref{alg:rollout} Line~9, we adjusted the algorithm to account for the greediness parameter $\alpha$; actions are selected using a linear combination of UCB1 values corresponding to both the top-level goal and the current landmark (cf. Equation~\ref{eqn:greediness}).  

During each rollout, four values are backpropagated through the search tree: the rollout cost for the current landmark $\varphi$, the rollout cost for the top-level goal $g$, and boolean values to indicate whether $\varphi$ and $g$ were actually achieved. The subgoal cost is used to update $Q_\varphi$, while the top-level cost is used to update $Q_g$ and $Q_{LM}$. Whenever a landmark $\varphi$ is achieved, it is removed from the landmark graph $\Phi$, and a new landmark $\varphi' \in \mathrm{leaves}(\Phi)$ is selected using UCB1 values calculated from $Q_{LM}$. The rollout then continues with the new subgoal, and a subgoal cost of 0 is backpropagated. Whenever the top-level goal is reached, the rollout terminates and both a top-level cost and subgoal cost of 0 are backpropagated. 

We additionally imposed a maximum depth for rollouts. If a terminal state is not reached after a fixed number of actions, the rollout is halted. During rollouts, action pruning, as implemented in PROST \cite{keller2012prost}, is used to reduce the branching factor by removing actions with the same distribution over successor states as another action. 

We emphasize that these modifications to UCT only add minimal computational overhead. \LAMP adds decision nodes to the Monte Carlo search tree whenever a landmark needs to be selected, which means that during each rollout, the number of nodes visited increases at most by the number of landmarks in $\Phi$. Assuming the number of landmarks is small, i.e. $|\Phi| \ll |S|$, the runtime for a rollout does not significantly increase; each rollout incurs at most $\mathcal{O}(|\Phi|)$ overhead. If no landmarks are present, a single trivial subgoal corresponding to the final goal $g$ is used, and LAMP is equivalent to standard UCT.

\section{Experiments using \LAMP}
\label{sec:results}

We evaluated \LAMP (Algorithm~\ref{alg:lamp}) on a set of benchmark probabilistic planning problems.  For our experiments, we set an execution budget of 200 action executions, a maximum rollout depth of 20, and an exploration constant of $\sqrt{2}$. We used a non-heuristic implementation of UCT; all $Q$ and $N$ values were initialized to 0. For the GUBS criterion, the goal utility constant $K_g$ was set to 1, and the utility function $u: \var{cost} \mapsto \exp(-\frac{1}{10} \var{cost})$ was used, mirroring the default parameter settings of \cite{crispino2024uctgubs}. 

We also used Fast-Downward's implementation of the $\mathrm{LM}^{\text{RHW}}$ landmark extraction algorithm \cite{richter2008landmarks} to compute a landmark graph, including all landmark orderings. Landmarks trivially satisfied in the initial state were pruned from the landmark graph, and the top-level goal was added to the landmark graph, ordered such that all other landmarks are predecessors of the top-level goal. 

In the experiments that follow, we varied the number of rollouts $\var{n\_rollouts} \in \{5, 10, 20, 50, 100, 200, 500, 1000, 2000, 5000\}$ and the greediness parameter $\alpha \in \{0, 0.2, 0.5, 0.8, 1\}$. When $\alpha = 0$, \LAMP is equivalent to standard UCT and serves as the baseline. The code for our implementation has been made available \cite{CODE}.

\subsection{Early benchmarks}

\begin{figure*}
\begin{minipage}{0.33\textwidth}
\raggedleft
\colorlet{maxcolor}{white}
\colorlet{mincolor}{blue!50}

\newcolumntype{C}[1]{>{\raggedleft\arraybackslash}m{#1}}

\renewcommand\gr[1]{\gradient{#1}{16.12333333333333}{103.41333333333333}{190.69333333333333}{maxcolor}{mincolor}}

\setlength{\tabcolsep}{2pt}
\setlength{\colwidth}{0.7cm}

\begin{tabular}{rr|C{\colwidth}C{\colwidth}C{\colwidth}C{\colwidth}C{\colwidth}|}
  & \multicolumn{1}{c}{} & \multicolumn{5}{c}{\begin{tabular}{@{}c@{}}less\\[-1ex]greedy\end{tabular} $\longleftarrow \alpha \longrightarrow$ \begin{tabular}{@{}c@{}}more\\[-1ex]greedy\end{tabular}}\\
    & \multicolumn{1}{c}{} & \multicolumn{1}{c}{0\tiny{/UCT}} & \multicolumn{1}{c}{0.2} & \multicolumn{1}{c}{0.5} & \multicolumn{1}{c}{0.8} & \multicolumn{1}{c}{1}\\
    \hhline{~~|-----}
  \parbox[t]{3mm}{\multirow{10}{*}{\rotatebox[origin=c]{90}{number of rollouts}}} & 5 &\gr{190.69} 190.7 & \gr{154.81} \textbf{154.8} & \gr{144.08} \underline{\textbf{144.1}} & \gr{150.72} \textbf{150.7} & \gr{163.00} \textbf{163.0}\\
& 10\ & \gr{179.31} 179.3 & \gr{113.65} \underline{\textbf{113.7}} & \gr{123.05} \textbf{123.1} & \gr{135.75} \textbf{135.7} & \gr{122.07} \textbf{122.1}\\
& 20\ & \gr{168.85} 168.9 & \gr{106.31} \underline{\textbf{106.3}} & \gr{108.32} \textbf{108.3} & \gr{109.60} \textbf{109.6} & \gr{133.25} \textbf{133.3}\\
& 50\ & \gr{148.96} 149.0 & \gr{62.80} \underline{\textbf{62.8}} & \gr{75.36} \textbf{75.4} & \gr{72.47} \textbf{72.5} & \gr{89.15} \textbf{89.1}\\
& 100\ & \gr{113.41} 113.4 & \gr{42.07} \underline{\textbf{42.1}} & \gr{47.49} \textbf{47.5} & \gr{48.65} \textbf{48.7} & \gr{72.01} \textbf{72.0}\\
& 200\ & \gr{69.55} 69.5 & \gr{28.75} \textbf{28.7} & \gr{27.93} \underline{\textbf{27.9}} & \gr{36.31} \textbf{36.3} & \gr{40.93} \textbf{40.9}\\
& 500\ & \gr{37.41} 37.4 & \gr{20.68} \underline{\textbf{20.7}} & \gr{22.45} \textbf{22.5} & \gr{23.17} \textbf{23.2} & \gr{24.45} \textbf{24.5}\\
& 1000\ & \gr{27.55} 27.5 & \gr{18.37} \underline{\textbf{18.4}} & \gr{19.56} \textbf{19.6} & \gr{21.16} \textbf{21.2} & \gr{22.16} \textbf{22.2}\\
& 2000\ & \gr{20.83} 20.8 & \gr{17.81} \underline{\textbf{17.8}} & \gr{18.47} \textbf{18.5} & \gr{18.09} \textbf{18.1} & \gr{19.44} 19.4\\
& 5000\ & \gr{19.83} 19.8 & \gr{16.13} \underline{\textbf{16.1}} & \gr{16.45} \textbf{16.5} & \gr{17.63} 17.6 & \gr{18.41} 18.4\\
  \hhline{~~|-----}
\end{tabular}\quad\mbox{}
\begin{center}
    \captsize \hspace{.8cm} \textbf{(a)} \domain{prob\_blocksworld} problem \texttt{p5}
\end{center}
\end{minipage}
\hfill
\begin{minipage}{0.33\textwidth}
\raggedleft
\colorlet{maxcolor}{white}
\colorlet{mincolor}{blue!50}

\newcolumntype{C}[1]{>{\raggedleft\arraybackslash}m{#1}}

\renewcommand\gr[1]{\gradient{#1}{13.736666666666666}{96.69333333333333}{179.64}{maxcolor}{mincolor}}

\setlength{\tabcolsep}{2pt}
\setlength{\colwidth}{0.7cm}

\begin{tabular}{rr|C{\colwidth}C{\colwidth}C{\colwidth}C{\colwidth}C{\colwidth}|}
  & \multicolumn{1}{c}{} & \multicolumn{5}{c}{\begin{tabular}{@{}c@{}}less\\[-1ex]greedy\end{tabular} $\longleftarrow \alpha \longrightarrow$ \begin{tabular}{@{}c@{}}more\\[-1ex]greedy\end{tabular}}\\
    & \multicolumn{1}{c}{} & \multicolumn{1}{c}{0\tiny{/UCT}} & \multicolumn{1}{c}{0.2} & \multicolumn{1}{c}{0.5} & \multicolumn{1}{c}{0.8} & \multicolumn{1}{c}{1}\\
    \hhline{~~|-----}
  \parbox[t]{3mm}{\multirow{10}{*}{\rotatebox[origin=c]{90}{number of rollouts}}} & 5 &\gr{179.64} 179.6 & \gr{60.72} \underline{\textbf{60.7}} & \gr{90.20} \textbf{90.2} & \gr{77.09} \textbf{77.1} & \gr{77.16} \textbf{77.2}\\
& 10\ & \gr{175.69} 175.7 & \gr{23.03} \underline{\textbf{23.0}} & \gr{32.24} \textbf{32.2} & \gr{29.61} \textbf{29.6} & \gr{25.68} \textbf{25.7}\\
& 20\ & \gr{159.33} 159.3 & \gr{27.16} \textbf{27.2} & \gr{27.81} \textbf{27.8} & \gr{23.63} \underline{\textbf{23.6}} & \gr{23.63} \underline{\textbf{23.6}}\\
& 50\ & \gr{105.03} 105.0 & \gr{22.21} \textbf{22.2} & \gr{22.69} \textbf{22.7} & \gr{16.39} \underline{\textbf{16.4}} & \gr{27.29} \textbf{27.3}\\
& 100\ & \gr{73.99} 74.0 & \gr{18.63} \textbf{18.6} & \gr{21.93} \textbf{21.9} & \gr{16.85} \underline{\textbf{16.9}} & \gr{17.67} \textbf{17.7}\\
& 200\ & \gr{42.13} 42.1 & \gr{17.45} \textbf{17.5} & \gr{17.53} \textbf{17.5} & \gr{15.55} \underline{\textbf{15.5}} & \gr{16.29} \textbf{16.3}\\
& 500\ & \gr{24.84} 24.8 & \gr{16.65} \textbf{16.7} & \gr{16.31} \textbf{16.3} & \gr{14.40} \underline{\textbf{14.4}} & \gr{14.92} \textbf{14.9}\\
& 1000\ & \gr{18.21} 18.2 & \gr{17.71} 17.7 & \gr{17.63} 17.6 & \gr{16.88} \underline{16.9} & \gr{16.92} 16.9\\
& 2000\ & \gr{16.24} 16.2 & \gr{15.31} 15.3 & \gr{14.67} \textbf{14.7} & \gr{14.28} \underline{\textbf{14.3}} & \gr{14.69} \textbf{14.7}\\
& 5000\ & \gr{15.19} 15.2 & \gr{15.16} 15.2 & \gr{13.83} \textbf{13.8} & \gr{13.75} \underline{\textbf{13.7}} & \gr{14.20} \textbf{14.2}\\
  \hhline{~~|-----}
\end{tabular}\quad\mbox{}
\begin{center}
    \captsize \hspace{.9cm} \textbf{(b)} \domain{elevators} problem \texttt{p5}
\end{center}
\end{minipage}
\hfill
\begin{minipage}{0.33\textwidth}
\raggedleft
\colorlet{maxcolor}{white}
\colorlet{mincolor}{blue!50}
\newcolumntype{C}[1]{>{\raggedleft\arraybackslash}m{#1}}
\renewcommand\gr[1]{\gradient{#1}{187.6}{199}{200.0}{maxcolor}{mincolor}}
\setlength{\tabcolsep}{2pt}
\setlength{\colwidth}{0.7cm}

\begin{tabular}{rr|C{\colwidth}C{\colwidth}C{\colwidth}C{\colwidth}C{\colwidth}|}
  & \multicolumn{1}{c}{} & \multicolumn{5}{c}{\begin{tabular}{@{}c@{}}less\\[-1ex]greedy\end{tabular} $\longleftarrow \alpha \longrightarrow$ \begin{tabular}{@{}c@{}}more\\[-1ex]greedy\end{tabular}}\\
    & \multicolumn{1}{c}{} & \multicolumn{1}{c}{0\tiny{/UCT}} & \multicolumn{1}{c}{0.2} & \multicolumn{1}{c}{0.5} & \multicolumn{1}{c}{0.8} & \multicolumn{1}{c}{1}\\
    \hhline{~~|-----}
  \parbox[t]{3mm}{\multirow{10}{*}{\rotatebox[origin=c]{90}{number of rollouts}}} & 5 &\gr{200.0} 200.0 & \gr{200.0} 200.0 & \gr{200.0} 200.0 & \gr{200.0} 200.0 & \gr{199.8} \underline{199.8}\\
  & 10\ & \gr{200.0} 200.0 & \gr{200.0} 200.0 & \gr{199.9} 199.9 & \gr{200.0} 200.0 & \gr{199.7} \underline{199.7}\\
  & 20\ & \gr{200.0} 200.0 & \gr{195.2} \underline{195.2} & \gr{199.2} 199.2 & \gr{200.0} 200.0 & \gr{198.4} 198.4\\
  & 50\ & \gr{200.0} 200.0 & \gr{199.3} 199.3 & \gr{197.2} \underline{197.2} & \gr{200.0} 200.0 & \gr{199.0} 199.0\\
  & 100\ & \gr{200.0} 200.0 & \gr{197.0} \underline{197.0} & \gr{198.3} 198.3 & \gr{198.6} 198.6 & \gr{199.5} 199.5\\
  & 200\ & \gr{200.0} 200.0 & \gr{198.4} 198.4 & \gr{197.6} \underline{197.6} & \gr{198.2} 198.2 & \gr{198.0} 198.0\\
  & 500\ & \gr{200.0} 200.0 & \gr{196.7} 196.7 & \gr{195.1} \underline{\textbf{195.1}} & \gr{197.3} 197.3 & \gr{198.5} 198.5\\
  & 1000\ & \gr{200.0} 200.0 & \gr{197.5} 197.5 & \gr{195.3} \underline{\textbf{195.3}} & \gr{196.8} 196.8 & \gr{197.1} 197.1\\
  & 2000\ & \gr{200.0} 200.0 & \gr{193.0} \textbf{193.0} & \gr{192.3} \textbf{192.3} & \gr{190.6} \textbf{190.6} & \gr{190.3} \underline{\textbf{190.3}}\\
  & 5000\ & \gr{200.0} 200.0 & \gr{189.3} \textbf{189.3} & \gr{187.6} \underline{\textbf{187.6}} & \gr{188.5} \textbf{188.5} & \gr{189.3} \textbf{189.3}\\
  \hhline{~~|-----}
\end{tabular}
\begin{center}
    \captsize \hspace{.8cm} \textbf{(c)} \domain{zenotravel} problem \texttt{p5}
\end{center}
\end{minipage}
\caption{Average cost (lower is better) of the solutions generated by \LAMP. $\mathrm{LM}^{\text{RHW}}$ generated 10, 11, and 12 nontrivial landmarks for these problem instances, respectively. Underlined values indicate the best performing $\alpha$-value in the row. Boldfaced values indicate where \LAMP significantly ($p < 0.0125$) dominated standard UCT ($\alpha = 0$) at the same number of rollouts. Statistical analysis and results from other problem instances are in Appendix \ref{appendix:results_early}.  
}
\label{fig:early}
\end{figure*}

Our first set of experiments use domains and problem instances from the probabilistic track of the Fifth International Planning Competition, which are publicly available \cite{ippc2006}. 

\begin{itemize}
    \item \domain{prob\_blocksworld}: a probabilistic variation of the standard blocksworld where blocks have a chance of slipping from the gripper onto the table when being picked up, stacked, and unstacked. 
    \item \domain{elevators}: a person must navigate between floors and elevator shafts to collect coins while avoiding gates which may send the person back to the first floor. 
    \item \domain{zenotravel}: people must move between cities using airplanes. An airplane requires fuel to fly, and can be flown at two different speeds---the higher speed requiring more fuel. 
\end{itemize}

In Figure \ref{fig:early}, we selected one problem instance from each domain. For each problem, we reported the average cost of solutions generated by \LAMP, averaged over 75 runs. Runs that exceeded the execution budget of 200 were halted and contributed 200 to the average. Additional results from another 26 problem instances in these domains were qualitatively similar, and are presented in Appendix~\ref{appendix:results_early}. 

These results show that a greedy approach can significantly improve performance when the planner is constrained to fewer rollouts. Even though standard UCT ($\alpha = 0$) provably converges to an optimal solution \cite{kocsis2006bandit}, in domains like \domain{prob\_blocksworld}, \domain{elevators} and \domain{zenotravel}, a greedy approach performs significantly better than standard UCT because the number of rollouts is insufficient for the non-greedy approach to converge. For the \domain{zenotravel} problem, standard UCT was never able to reach a goal within the cost budget while the greedy approach could, even with 5000 rollouts. However, we also observe that the optimal greediness value $\alpha$ depends not only on the problem, but also the number of rollouts performed. A greediness value of $\alpha \approx 0.2$ yielded the lowest average cost for the \domain{prob\_blocksworld} problem, except in the cases of 5 and 200 rollouts where $\alpha = 0.5$ performed best. In contrast, $\alpha = 0.8$ yielded the lowest average cost for the \domain{elevators} problem, except in the cases of 5 and 10 rollouts. For the \domain{zenotravel} problem, the optimal greediness value varied between $0.2$ and $1$. 

These observed differences are often significant. We ran a $t$-test between \LAMP and baseline UCT ($\alpha = 0$) at the same number of rollouts for each greediness value $\alpha$. \LAMP statistically ($p < 0.0125$, the Bonferroni adjusted $\alpha_{\textit{stat}}=0.05/4$) dominates standard UCT in 4 of 10 rows for the \domain{zenotravel} problem, 9 of 10 rows for the \domain{elevators} problem, and all 10 rows for the \domain{prob\_blocksworld} problem (details in  Appendix~\ref{appendix:results_early}). In the complete set of results, \LAMP significantly outperformed standard UCT in 25 of 29 early benchmark problems, and in 172 of 290 total rows. 

\subsection{Probabilistically interesting benchmarks}
The \domain{prob\_blocksworld}, \domain{elevators} and \domain{zenotravel} domains do not contain deadlock states, i.e. states from which the top-level goal is not reachable, meaning some may find these problems to be ``\emph{probabilistically uninteresting}'', a term coined by \citet{little2007probabilistic}. There is, however, no free lunch. As we will see, in problems with deadlock states, a greedy approach together with a poorly placed landmark can quickly lead the planner to fail. In the results that follow, we identify problem instances from three deadlocking domains that demonstrate this to varying degrees:

\begin{itemize}
    \item \domain{exploding\_blocksworld} \cite{ippc2006}: a dead-end variation of standard blocksworld where blocks can detonate upon being put down. 
    \item \domain{tireworld} \cite{ippc2006}: a vehicle must navigate a road network to reach a goal. Every time the vehicle moves, it has a 40\% chance of getting a flat tire. Some locations have spare tires. 
    \item \domain{triangle\_tireworld} \cite{little2007probabilistic}: a variation of \texttt{tireworld} with a 50\% flat-tire rate and a triangular road network (Figure~\ref{fig:tireworld_triangle}). 
\end{itemize}

\begin{figure*}
\begin{minipage}{0.33\textwidth}
\raggedleft
\colorlet{maxcolor}{red!50}
\colorlet{mincolor}{white}

\newcolumntype{C}[1]{>{\raggedleft\arraybackslash}m{#1}}

\renewcommand\gr[1]{\gradient{#1}{0}{0.1}{0.6}{maxcolor}{mincolor}}

\setlength{\tabcolsep}{2pt}
\setlength{\colwidth}{0.7cm}

\begin{tabular}{rr|C{\colwidth}C{\colwidth}C{\colwidth}C{\colwidth}C{\colwidth}|}
  & \multicolumn{1}{c}{} & \multicolumn{5}{c}{\begin{tabular}{@{}c@{}}less\\[-1ex]greedy\end{tabular} $\longleftarrow \alpha \longrightarrow$ \begin{tabular}{@{}c@{}}more\\[-1ex]greedy\end{tabular}}\\
    & \multicolumn{1}{c}{} & \multicolumn{1}{c}{0\tiny{/UCT}} & \multicolumn{1}{c}{0.2} & \multicolumn{1}{c}{0.5} & \multicolumn{1}{c}{0.8} & \multicolumn{1}{c}{1}\\
    \hhline{~~|-----}
  \parbox[t]{3mm}{\multirow{10}{*}{\rotatebox[origin=c]{90}{number of rollouts}}} & 5 &\gr{0.01} 0.01 & \gr{0.11} \underline{0.11} & \gr{0.05} 0.05 & \gr{0.09} 0.09 & \gr{0.08} 0.08\\
& 10\ & \gr{0.04} 0.04 & \gr{0.17} 0.17 & \gr{0.27} \underline{\textbf{0.27}} & \gr{0.09} 0.09 & \gr{0.12} 0.12\\
& 20\ & \gr{0.01} 0.01 & \gr{0.32} \underline{\textbf{0.32}} & \gr{0.24} \textbf{0.24} & \gr{0.12} 0.12 & \gr{0.16} \textbf{0.16}\\
& 50\ & \gr{0.03} 0.03 & \gr{0.29} \underline{\textbf{0.29}} & \gr{0.21} \textbf{0.21} & \gr{0.13} 0.13 & \gr{0.11} 0.11\\
& 100\ & \gr{0.08} 0.08 & \gr{0.27} \textbf{0.27} & \gr{0.28} \underline{\textbf{0.28}} & \gr{0.21} 0.21 & \gr{0.12} 0.12\\
& 200\ & \gr{0.08} 0.08 & \gr{0.47} \underline{\textbf{0.47}} & \gr{0.32} \textbf{0.32} & \gr{0.37} \textbf{0.37} & \gr{0.08} 0.08\\
& 500\ & \gr{0.32} 0.32 & \gr{0.37} 0.37 & \gr{0.48} \underline{0.48} & \gr{0.36} 0.36 & \gr{0.09} 0.09\\
& 1000\ & \gr{0.31} 0.31 & \gr{0.43} 0.43 & \gr{0.59} \underline{\textbf{0.59}} & \gr{0.47} 0.47 & \gr{0.08} 0.08\\
& 2000\ & \gr{0.41} 0.41 & \gr{0.49} 0.49 & \gr{0.57} 0.57 & \gr{0.59} \underline{0.59} & \gr{0.03} 0.03\\
& 5000\ & \gr{0.48} 0.48 & \gr{0.53} 0.53 & \gr{0.60} \underline{0.60} & \gr{0.51} 0.51 & \gr{0.16} 0.16\\
  \hhline{~~|-----}
\end{tabular}
\begin{flushright}
    \captsize \textbf{(a)} \domain{exploding\_blocksworld} problem \texttt{p4}
\end{flushright}
\end{minipage}
\hfill
\begin{minipage}{0.33\textwidth}
\raggedleft
\colorlet{maxcolor}{red!50}
\colorlet{mincolor}{white}

\newcolumntype{C}[1]{>{\raggedleft\arraybackslash}m{#1}}

\renewcommand\gr[1]{\gradient{#1}{0.36}{0.67}{0.97}{maxcolor}{mincolor}}

\setlength{\tabcolsep}{2pt}
\setlength{\colwidth}{0.7cm}

\begin{tabular}{rr|C{\colwidth}C{\colwidth}C{\colwidth}C{\colwidth}C{\colwidth}|}
  & \multicolumn{1}{c}{} & \multicolumn{5}{c}{\begin{tabular}{@{}c@{}}less\\[-1ex]greedy\end{tabular} $\longleftarrow \alpha \longrightarrow$ \begin{tabular}{@{}c@{}}more\\[-1ex]greedy\end{tabular}}\\
    & \multicolumn{1}{c}{} & \multicolumn{1}{c}{0\tiny{/UCT}} & \multicolumn{1}{c}{0.2} & \multicolumn{1}{c}{0.5} & \multicolumn{1}{c}{0.8} & \multicolumn{1}{c}{1}\\
    \hhline{~~|-----}
  \parbox[t]{3mm}{\multirow{10}{*}{\rotatebox[origin=c]{90}{number of rollouts}}} & 5 &\gr{0.37} 0.37 & \gr{0.71} \underline{\textbf{0.71}} & \gr{0.64} \textbf{0.64} & \gr{0.67} \textbf{0.67} & \gr{0.60} 0.60\\
  & 10\ & \gr{0.49} 0.49 & \gr{0.81} \underline{\textbf{0.81}} & \gr{0.75} \textbf{0.75} & \gr{0.81} \underline{\textbf{0.81}} & \gr{0.68} 0.68\\
  & 20\ & \gr{0.77} 0.77 & \gr{0.81} \underline{0.81} & \gr{0.80} 0.80 & \gr{0.81} \underline{0.81} & \gr{0.81} \underline{0.81}\\
  & 50\ & \gr{0.77} 0.77 & \gr{0.81} 0.81 & \gr{0.85} \underline{0.85} & \gr{0.84} 0.84 & \gr{0.76} 0.76\\
  & 100\ & \gr{0.85} 0.85 & \gr{0.84} 0.84 & \gr{0.87} \underline{0.87} & \gr{0.77} 0.77 & \gr{0.76} 0.76\\
  & 200\ & \gr{0.84} 0.84 & \gr{0.77} 0.77 & \gr{0.83} 0.83 & \gr{0.80} 0.80 & \gr{0.87} \underline{0.87}\\
  & 500\ & \gr{0.81} 0.81 & \gr{0.84} 0.84 & \gr{0.85} \underline{0.85} & \gr{0.85} \underline{0.85} & \gr{0.84} 0.84\\
  & 1000\ & \gr{0.87} 0.87 & \gr{0.93} 0.93 & \gr{0.95} \underline{0.95} & \gr{0.84} 0.84 & \gr{0.79} 0.79\\
  & 2000\ & \gr{0.96} 0.96 & \gr{0.97} \underline{0.97} & \gr{0.97} \underline{0.97} & \gr{0.80} 0.80 & \gr{0.83} 0.83\\
  & 5000\ & \gr{0.92} 0.92 & \gr{0.97} \underline{0.97} & \gr{0.97} \underline{0.97} & \gr{0.93} 0.93 & \gr{0.87} 0.87\\
  \hhline{~~|-----}
\end{tabular}
\begin{center}
    \captsize \hspace{1cm} \textbf{(b)} \domain{tireworld} problem \texttt{p15}
\end{center}
\end{minipage}
\hfill
\begin{minipage}{0.33\textwidth}
\raggedleft
\colorlet{maxcolor}{red!50}
\colorlet{mincolor}{white}

\newcolumntype{C}[1]{>{\raggedleft\arraybackslash}m{#1}}

\renewcommand\gr[1]{\gradient{#1}{0}{0.3}{1.00}{maxcolor}{mincolor}}

\setlength{\tabcolsep}{2pt}
\setlength{\colwidth}{0.7cm}

\begin{tabular}{rr|C{\colwidth}C{\colwidth}C{\colwidth}C{\colwidth}C{\colwidth}|}
  & \multicolumn{1}{c}{} & \multicolumn{5}{c}{\begin{tabular}{@{}c@{}}less\\[-1ex]greedy\end{tabular} $\longleftarrow \alpha \longrightarrow$ \begin{tabular}{@{}c@{}}more\\[-1ex]greedy\end{tabular}}\\
    & \multicolumn{1}{c}{} & \multicolumn{1}{c}{0\tiny{/UCT}} & \multicolumn{1}{c}{0.2} & \multicolumn{1}{c}{0.5} & \multicolumn{1}{c}{0.8} & \multicolumn{1}{c}{1}\\
    \hhline{~~|-----}
  \parbox[t]{3mm}{\multirow{10}{*}{\rotatebox[origin=c]{90}{number of rollouts}}} & 5 &\gr{0.71} \underline{0.71} & \gr{0.21} 0.21 & \gr{0.27} 0.27 & \gr{0.29} 0.29 & \gr{0.35} 0.35\\
  & 10\ & \gr{0.71} \underline{0.71} & \gr{0.21} 0.21 & \gr{0.28} 0.28 & \gr{0.27} 0.27 & \gr{0.25} 0.25\\
  & 20\ & \gr{0.89} \underline{0.89} & \gr{0.27} 0.27 & \gr{0.13} 0.13 & \gr{0.16} 0.16 & \gr{0.27} 0.27\\
  & 50\ & \gr{0.97} \underline{0.97} & \gr{0.25} 0.25 & \gr{0.20} 0.20 & \gr{0.11} 0.11 & \gr{0.29} 0.29\\
  & 100\ & \gr{0.99} \underline{0.99} & \gr{0.33} 0.33 & \gr{0.20} 0.20 & \gr{0.13} 0.13 & \gr{0.15} 0.15\\
  & 200\ & \gr{1.00} \underline{1.00} & \gr{0.47} 0.47 & \gr{0.12} 0.12 & \gr{0.15} 0.15 & \gr{0.31} 0.31\\
  & 500\ & \gr{1.00} \underline{1.00} & \gr{0.35} 0.35 & \gr{0.23} 0.23 & \gr{0.15} 0.15 & \gr{0.08} 0.08\\
  & 1000\ & \gr{1.00} \underline{1.00} & \gr{0.41} 0.41 & \gr{0.33} 0.33 & \gr{0.11} 0.11 & \gr{0.12} 0.12\\
  & 2000\ & \gr{1.00} \underline{1.00} & \gr{0.43} 0.43 & \gr{0.29} 0.29 & \gr{0.09} 0.09 & \gr{0.23} 0.23\\
  & 5000\ & \gr{1.00} \underline{1.00} & \gr{0.47} 0.47 & \gr{0.33} 0.33 & \gr{0.17} 0.17 & \gr{0.19} 0.19\\
  \hhline{~~|-----}
\end{tabular}
\begin{flushright}
    \captsize \textbf{(c)} \domain{triangle\_tireworld} problem \texttt{p2}
\end{flushright}
\end{minipage}
\caption{Success rate  (higher is better) of \LAMP reaching a goal state. $\mathrm{LM}^{\text{RHW}}$ generated 9, 3, and 4 nontrivial landmarks for these problem instances, respectively.
Underlined values indicate the best performing $\alpha$-value in the row. Boldfaced values indicate where \LAMP significantly ($p < 0.0125$) dominated standard UCT ($\alpha = 0$) at the same number of rollouts. Statistical analysis and results from other problem instances are in Appendix \ref{appendix:results_prob_interesting}. }
\label{fig:prob_interesting}
\end{figure*}

In Figure \ref{fig:prob_interesting}, we selected one problem instance from each domain. Given the potential for failure, for each problem, we reported the success rate of \LAMP in reaching the goal within the execution budget, over 75 runs. Additional results from another 26 problem instances in these domains are presented in Appendix \ref{appendix:results_prob_interesting}. 

In the \domain{exploding\_blocksworld} problem, standard UCT performs significantly worse, even at 1000 rollouts, with a greediness value between 0.2 and 0.5 yielding the best results. In the \domain{tireworld} problem, a greedy approach dominates standard UCT when constrained to fewer than 10 rollouts, but gradually loses its advantage as the number of rollouts increases. 

The \domain{triangle tireworld} problem, depicted in Figure~\ref{fig:tireworld_triangle}, is specifically chosen to elicit pathological behavior from a greedy planner, representing a worst-case scenario for LAMP. Although a safe solution exists, where the car moves to node 9, then to the goal, the identified landmarks can mislead a greedy planner toward an unsafe path. Starting from node 1, the fastest way to achieve the first landmark, $\varphi_1$, is to move to node 3. Although not an immediate deadlock, it is unsafe, which explains why any inclination toward greediness will only lower the overall success rate. A greedy approach may enter unsafe or deadlock states while achieving a landmark, overlooking that this may prevent it from reaching a future subgoal. 

We analyzed the significance of the observed differences between LAMP and baseline UCT using a two-tailed Boschloo exact test (similar to a Fisher exact test) with a Bonferroni adjusted $\alpha_{\textit{stat}}=0.05/4$. \LAMP statistically dominates standard UCT in 6 of 10 rows in the \domain{exploding\_blocksworld} problem, and in 2 of 10 rows in the \domain{tireworld} problem. However, standard UCT dominates all greedy variants in the \domain{traingle\_tireworld} problem. Full statistical analysis and results from other problems are in Appendix~\ref{appendix:results_prob_interesting}. In the complete set of results, \LAMP significantly outperformed standard UCT in 11 of 29 probabilistically interesting benchmark problems, and in 42 of 290 total rows. 

\subsection{Discussion}

The results indicate that in problems without deadlock states, when the algorithm is constrained to a small number of rollouts, a greedy approach ($\alpha > 0$) often outperforms standard UCT ($\alpha = 0$). As the number of rollouts grows, standard UCT will eventually converge to an optimal solution; however, for larger planning problems, this can take prohibitively long. In contrast, a greedy approach decomposes the large problem into smaller subproblems, allowing UCT to converge to viable policies for these subproblems much more quickly than for the top-level goal. This makes a greedy algorithm an advantageous choice in large domains or for anytime settings. However, in a domain with deadlock states, a greedy approach may inadvertently enter a deadlock state while achieving a landmark, thereby preventing it from achieving the top-level goal. While a greedy approach can still outperform the baseline in such scenarios, careful selection of landmarks is crucial to ensure that none of them contain or lead to deadlock states.

\begin{figure}[t]
    \centering\begin{tikzpicture}[scale=1]
    \node[rectangle, draw=black, minimum size=0.25cm] (n1) at (0, 0) {};
    \node[circle, draw=black, minimum size=0.25cm] (n2) at (1, .5) {};
    \node[rectangle, draw=black, minimum size=0.25cm] (n3) at (0, 1) {};
    \node[circle, draw=black, minimum size=0.25cm] (n4) at (2, 1) {};
    \node[circle, draw=black, minimum size=0.25cm] (n5) at (1, 1.5) {};
    \node[circle, draw=black, minimum size=0.25cm] (n6) at (3, 1.5) {};
    \node[rectangle, draw=black, minimum size=0.25cm] (n7) at (0, 2) {};
    \node[rectangle, draw=black, minimum size=0.25cm] (n8) at (2, 2) {};
    \node[circle, draw=black, minimum size=0.25cm] (n9) at (4, 2) {};
    \node[circle, draw=black, minimum size=0.25cm] (n10) at (1, 2.5) {};
    \node[circle, draw=black, minimum size=0.25cm] (n11) at (3, 2.5) {};
    \node[rectangle, draw=black, minimum size=0.25cm] (n12) at (0, 3) {};
    \node[circle, draw=black, minimum size=0.25cm] (n13) at (2, 3) {};
    \node[circle, draw=black, minimum size=0.25cm] (n14) at (1, 3.5) {};
    \node[rectangle, draw=black, minimum size=0.25cm] (n15) at (0, 4) {};
    \draw[-latex] (n1) -- (n2);
    \draw[-latex] (n2) -- (n3);
    \draw[-latex] (n2) -- (n4);
    \draw[-latex] (n3) -- (n5); 
    \draw[-latex] (n4) -- (n5); 
    \draw[-latex] (n4) -- (n6); 
    \draw[-latex] (n5) -- (n7); 
    \draw[-latex] (n4) -- (n8); 
    \draw[-latex] (n6) -- (n8); 
    \draw[-latex] (n6) -- (n9); 
    \draw[-latex] (n7) -- (n10); 
    \draw[-latex] (n8) -- (n13); 
    \draw[-latex] (n8) -- (n11); 
    \draw[-latex] (n9) -- (n11); 
    \draw[-latex] (n10) -- (n12);
    \draw[-latex] (n10) -- (n13); 
    \draw[-latex] (n11) -- (n13); 
    \draw[-latex] (n12) -- (n14); 
    \draw[-latex] (n13) -- (n14); 
    \draw[-latex] (n14) -- (n15); 
    \draw[-latex] (n1) -- (n3); 
    \draw[-latex] (n3) -- (n7); 
    \draw[-latex] (n7) -- (n12); 
    \draw[-latex] (n12) -- (n15); 
    \node at (n1) [xshift=-.75cm] {start};
    \node at (n15) [xshift=-.75cm] {goal};
    \node[draw=black,dashed,inner sep=5pt,rounded corners=1mm,rotate fit=26.57,fit=(n3)(n11)] {};
    \node[draw=black,dashed,inner sep=5pt,rounded corners=1mm,rotate fit=26.57,fit=(n7)(n13)] {};
    \node[draw=black,dashed,inner sep=5pt,rounded corners=1mm,rotate fit=26.57,fit=(n12)(n14)] {};
    \node[draw=black,dashed,inner sep=5pt,rounded corners=1mm,rotate fit=26.57,fit=(n15)] {};
    \node at (n11) [xshift=.6cm, yshift=.3cm] {$\varphi_1$};
    \node at (n13) [xshift=.6cm, yshift=.3cm] {$\varphi_2$};
    \node at (n14) [xshift=.6cm, yshift=.3cm] {$\varphi_3$};
    \node at (n15) [xshift=.6cm, yshift=.3cm] {$\varphi_4$};
    \node at (n1) {\scriptsize{1}};
    \node at (n2) {\scriptsize{2}};
    \node at (n3) {\scriptsize{3}};
    \node at (n4) {\scriptsize{4}};
    \node at (n5) {\scriptsize{5}};
    \node at (n6) {\scriptsize{6}};
    \node at (n7) {\scriptsize{7}};
    \node at (n8) {\scriptsize{8}};
    \node at (n9) {\scriptsize{9}};
    \node at (n10) {\scriptsize{10}};
    \node at (n11) {\scriptsize{11}};
    \node at (n12) {\scriptsize{12}};
    \node at (n13) {\scriptsize{13}};
    \node at (n14) {\scriptsize{14}};
    \node at (n15) {\scriptsize{15}};
\end{tikzpicture}
    \caption{A depiction of \domain{triangle\_tireworld} problem \texttt{p2} of moving a car from node 1 to node 15. Each time the car moves between nodes, there is a 50\% chance it gets a flat tire. There are spare tires at circular nodes. All roads are ``one-way.'' The only safe solution is to move to node 9, then to node 15. The landmarks generated by $\mathrm{LM}^{\text{RHW}}$ are shown in the dashed boxes; e.g. $\varphi_2 \equiv (\text{car-at}(7) \vee \text{car-at}(10) \vee \text{car-at}(13))$.} 
    \label{fig:tireworld_triangle}
\end{figure}
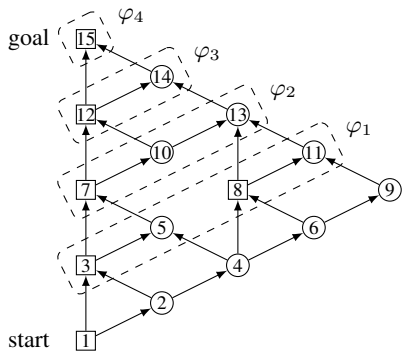

\section{Related work}
\label{sec:related}

\citet{porteous2001extraction} first introduced landmarks for propositional planning. Although deciding whether a propositional formula is a landmark is \textsf{\small PSPACE-complete}, they proposed an algorithm, $\mathrm{LM}^{\text{RPG}}$, that could efficiently extract landmarks and orderings using a delete-relaxed planning graph (RPG). Using the resulting landmark graph, their proposed method restricted the action space by disallowing actions that would achieve landmarks out of order, achieving limited results. \citet{hoffmann2004ordered} later developed an alternative approach that used landmarks to decompose the planning problem by iteratively searching for a plan to the nearest landmark with no predecessors, rather than searching for a plan to the goal, demonstrating a more substantial speed-up. 
\citet{vernhes2013problem} later revisited this problem-splitting approach with a search framework based on landmark orderings for a given landmark graph. 

There have also been significant developments in using landmarks to generate heuristic values to guide planning \cite{zhu2003landmark,richter2008landmarks,helmert2009landmarks,pommerening2013incremental,buchner2024hitting}. Others have extended these approaches beyond classical planning to numeric \cite{scala2017landmarks} and lifted planning \cite{wichlacz2022landmark}.

Landmarks have not been as widely used in probabilistic planning problems. \citet{speck2015necessary} used landmarks to compute a set of necessary observations used to minimize the number of sensors on an agent in a partially observable nondeterministic domain. \citet{sreedharan2020tldr} defined policy landmarks for MDPs to identify subgoals for a given policy. These policy landmarks were then used to provide high-level explanations of policies and required actions \cite{sreedharan2023generalizing}.

Within reinforcement learning, the options framework \cite{sutton1999between} is an example of a hierarchical approach that models temporally extended actions or skills, allowing an agent to solve smaller subproblems and compose the resulting policies. Several works aim to learn options that navigate to ``bottleneck'' states that occur frequently on solution trajectories \cite{mcgovern2001automatic,stolle2002learning,menache2002q,csimcsek2004using,csimcsek2005identifying}, or prototypical states of well-connected regions of the state space \cite{ramesh2019successor}. Then, these options may be used alongside primitive actions in learning methods such as $Q$-learning. Landmarks have also been used to guide reinforcement learning agents through POMDPs by providing guiding rewards based on the value of visiting each landmark in the task \cite{demir2023landmark}. 

\section{Conclusion and future work}

We extended classical landmarks to probabilistic settings and introduced \LAMP, a particular implementation of \LandmarkPlan based on UCT, as a domain-independent probabilistic planner that leverages classical landmarks as subgoals in MDPs. \LAMP outperforms standard UCT on several benchmark planning problems, demonstrating the effectiveness of using landmarks for probabilistic planning, and suggesting that it may be worthwhile to incorporate landmark guidance in more sophisticated probabilistic planning systems. 

We also studied the trade-off between a greedy and non-greedy approach to achieving landmarks during planning. A non-greedy approach---equivalent to UCT---provably converges to an optimal solution without landmarks but may take prohibitively long to find solutions in large planning domains, making it impractical for online or time-constrained applications. In contrast, a greedy approach decomposes the planning task into a sequence of simpler subproblems. It often outperforms plain UCT with fewer rollouts, albeit at the cost of completeness. Problems where landmarks may include deadlock states or unsafe states may pose a challenge to a greedy algorithm. 

Future work could focus on integrating deadlock detection and avoidance into \LAMP, mitigating the risk of poorly chosen landmarks. More broadly, it could investigate alternative methods, beyond probabilistic landmarks, to identify effective subgoals for problem decomposition. Since we observed that the optimal greediness value $\alpha$ depends on both the problem domain and the number of rollouts, future work could investigate the potential for an adaptive $\alpha$-value that changes as planning progresses. Finally, using other probabilistic planning techniques, including non-sampling-based approaches, to implement the general landmark-based decomposition approach outlined in \LandmarkPlan (Algorithm~\ref{alg:landmarkplan}) could yield improvements similar to those achieved by our UCT-based approach.


\begin{ack}
We thank NRL for funding this research. We also thank Paul Zaidins for many helpful discussions.  
\end{ack}


\bibliography{mybibfile}

\appendix
\onecolumn
\renewcommand{\thefigure}{A\arabic{figure}}
\setcounter{figure}{0}
\renewcommand{\thepage}{A\arabic{page}}
\setcounter{page}{1}

\section{Sequential landmark planning}
\label{appendix:sequential}

Let $\prob = (\Sigma, s_0, g)$ be a probabilistic planning problem, let $\langle \varphi_1, \dots, \varphi_k\rangle$ be a sequence of landmarks where $\varphi_k = g$. For $i \in \{1, \dots, k\}$, suppose policy $\pi_i$ achieves $\varphi_i$. A solution to $\prob$ can be obtained by running policies $\pi_1, \dots, \pi_k$ in sequence.
\begin{algorithm}[h]
    \centering
    \caption{SequentialPlan}\label{sequentialplan}
    \begin{algorithmic}[1]
\Procedure{Sequential-Plan}{$\Sigma, s_0, \langle \varphi_1, \dots, \varphi_k \rangle$}
   \State $s \gets s_0$
   \For{$i = 1 \dots k$}
      \State $\pi_i \gets$ policy for $(\Sigma, s, \varphi_i)$
      \While{$s \not\models \varphi_i$}
         \State $s \gets \mathsf{apply}(s, \pi_i(s))$
      \EndWhile
   \EndFor
\EndProcedure
\end{algorithmic}
    \label{alg:seq}
\end{algorithm}

\noindent In the best case, for a well-chosen sequence of landmarks, this can yield an exponential reduction in planning time. 

\begin{figure}[h]
    \centering \begin{tikzpicture}
    \node[circle, draw=black,minimum size=0.4cm,inner sep=0pt] (root1) at (0, 0) {\scriptsize{$s_0$}};
    \node[circle, draw=black,minimum size=0.4cm,inner sep=0pt] (goal1) at (3, 0) {\scriptsize{$s_g$}};
    \draw[draw=black] (root1) -- (3, 2);
    \draw[draw=black] (root1) -- (3, -2);
    \draw[decorate,decoration=zigzag] (root1) -- (goal1);

    \node[circle, draw=black,minimum size=0.4cm,inner sep=0pt] (root2) at (4, 0) {\scriptsize{$s_0$}};
    \node[circle, draw=black,minimum size=0.4cm,inner sep=0pt] (lm1) at (5, .1) {\scriptsize{$s_1$}};
    \node[circle, draw=black,minimum size=0.4cm,inner sep=0pt] (lm2) at (6, -.1) {\scriptsize{$s_2$}};
    \node[circle, draw=black,minimum size=0.4cm,inner sep=0pt] (goal2) at (7, 0) {\scriptsize{$s_g$}};
    \draw[draw=black] (root2) -- (5, 2/3);
    \draw[draw=black] (root2) -- (5, -2/3);
    \draw[decorate,decoration=zigzag] (root2) -- (lm1);
    \draw[draw=black] (lm1) -- (6, .1 + 2/3);
    \draw[draw=black] (lm1) -- (6, .1 - 2/3);
    \draw[decorate,decoration=zigzag] (lm2) -- (lm1);
    \draw[draw=black] (lm2) -- (7, 2/3 - .1);
    \draw[draw=black] (lm2) -- (7, -2/3 - .1);
    \draw[decorate,decoration=zigzag] (lm2) -- (goal2);
\end{tikzpicture}
    \caption{Suppose $\varphi_1$ and $\varphi_2$ are landmarks such that $s_1 \models \varphi_1$ and $s_2 \models \varphi_2$. In the best case, sequential planning can yield an exponential speed-up. }
    \label{fig:speedup}
\end{figure}
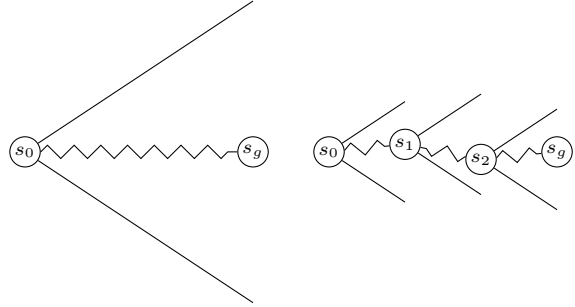

However, in general, for a problem $\prob = (\Sigma, s_0, g)$ and an unordered collection of landmarks $\Phi$, there is no unique ``optimal'' sequencing of $\Phi$ that minimizes expected cost. As illustrated in Figure~\ref{fig:ordering}, in the classical setting, the best order to achieve a set of landmarks in may depend on the subplans used for each landmark. Moreover, in the probabilistic setting, the optimal sequence for attaining landmarks can even depend on the probabilistic outcomes of actions.
\begin{figure}[h]
    \centering
    \begin{minipage}[b]{.45\linewidth}
        \centering\begin{tikzpicture}[scale=1.5]
    \node[circle, draw=black,minimum size=0.6cm,inner sep=0pt] (r1) at (0, 0) {$s_0$};
    \node[circle, draw=black,minimum size=0.6cm,inner sep=0pt] (s1) at (.75, .5) {$s_1$};
    \node[circle, draw=black,minimum size=0.6cm,inner sep=0pt] (s2) at (1.5, .5) {$s_2$};
    \node[circle, draw=black,minimum size=0.6cm,inner sep=0pt] (s3) at (.75, -.5) {$s_3$};
    \node[circle, draw=black,minimum size=0.6cm,inner sep=0pt] (s4) at (1.5, -.5) {$s_4$};
    \node[circle, draw=black,minimum size=0.6cm,inner sep=0pt] (g1) at (2.25, 0) {$s_g$};
    \draw [-latex] (r1) to (s1);
    \draw [-latex] (r1) to (s3);
    \draw [-latex] (s1) to [out=30, in=150] (s2);
    \draw [-latex] (s2) to [out=210, in=330] (s1);
    \draw [-latex] (s3) to [out=30, in=150] (s4);
    \draw [-latex] (s4) to [out=210, in=330] (s3);
    \draw [-latex] (s2) to (g1);
    \draw [-latex] (s4) to (g1);
    \node[draw=black,dashed,inner sep=5pt,rounded corners=.3cm,rotate fit=53.13,fit=(s2)(s3)] {};
    \node[draw=black,dashed,inner sep=5pt,rounded corners=.3cm,rotate fit=-53.13,fit=(s1)(s4)] {};
    \node at (s3) [xshift=-.5cm, yshift=-.5cm] {$\varphi_1$};
    \node at (s4) [xshift=.5cm, yshift=-.5cm] {$\varphi_2$};
\end{tikzpicture}\\[0pt]
        
        \captsize \textbf{(a)} classical problem
    \end{minipage}
    \hfil
    \begin{minipage}[b]{.45\linewidth}
        \centering\begin{tikzpicture}[scale=1.5]
    \node[circle, draw=black,minimum size=0.6cm,inner sep=0pt] (r1) at (3, 0) {$s_0$};
    \node[circle, draw=black,minimum size=0.6cm,inner sep=0pt] (s1) at (3.75, .5) {$s_1$};
    \node[circle, draw=black,minimum size=0.6cm,inner sep=0pt] (s2) at (4.5, .5) {$s_2$};
    \node[circle, draw=black,minimum size=0.6cm,inner sep=0pt] (s3) at (3.75, -.5) {$s_3$};
    \node[circle, draw=black,minimum size=0.6cm,inner sep=0pt] (s4) at (4.5, -.5) {$s_4$};
    \node[circle, draw=black,minimum size=0.6cm,inner sep=0pt] (g1) at (5.25, 0) {$s_g$};
    \draw [-latex] (r1) to [out=0,in=240] (s1);
    \draw [-latex] (r1) to [out=0,in=120] (s3);
    \draw [-latex] (s1) to [out=30, in=150] (s2);
    \draw [-latex] (s2) to [out=210, in=330] (s1);
    \draw [-latex] (s3) to [out=30, in=150] (s4);
    \draw [-latex] (s4) to [out=210, in=330] (s3);
    \draw [-latex] (s2) to (g1);
    \draw [-latex] (s4) to (g1);
    \node[draw=black,dashed,inner sep=5pt,rounded corners=.3cm,rotate fit=53.13,fit=(s2)(s3)] {};
    \node[draw=black,dashed,inner sep=5pt,rounded corners=.3cm,rotate fit=-53.13,fit=(s1)(s4)] {};
    \node at (s3) [xshift=-.5cm, yshift=-.5cm] {$\varphi_1$};
    \node at (s4) [xshift=.5cm, yshift=-.5cm] {$\varphi_2$};
\end{tikzpicture}\\[0pt]
        \captsize \textbf{(b)} probabilistic problem 
    \end{minipage}
    \caption{In both problems, suppose $\varphi_1$ and $\varphi_2$ are landmarks such that $s_2, s_3 \models \varphi_1$ and $s_1, s_4 \models \varphi_2$. Without first fixing policies $\pi_1$ and $\pi_2$ which achieve $\varphi_1$ and $\varphi_2$, respectively, it is unclear whether $\mathsc{SequentialPlan}(\Sigma, s_0, \langle \varphi_1, \varphi_2\rangle)$ or $\mathsc{SequentialPlan}(\Sigma, s_0, \langle \varphi_2, \varphi_1\rangle)$ would be better. 
    }
    \label{fig:ordering}
\end{figure}
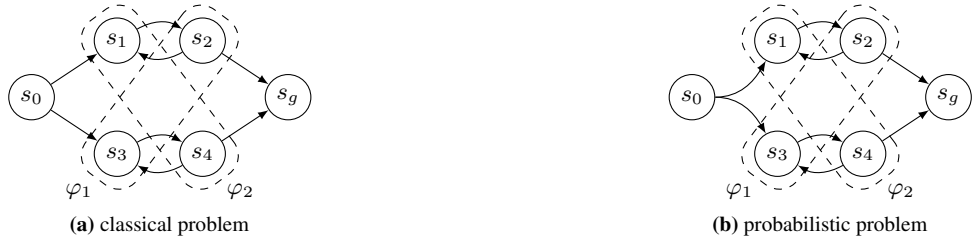

Moreover, the \textsc{Sequential-Plan} procedure is incomplete. Failures can occur in two situations: (1) the chosen landmark is unreachable, or (2) a deadlock state is encountered from which the final goal is unreachable.
\begin{figure}[h!]
    \centering
    \begin{minipage}[b]{.45\linewidth}
        \centering\begin{tikzpicture}[scale=1.5]
    \node[circle, draw=black,minimum size=0.6cm,inner sep=0pt] (r1) at (0, 0) {$s_0$};
    \node[circle, draw=black,minimum size=0.6cm,inner sep=0pt] (s1) at (.75, .5) {$s_1$};
    \node[circle, draw=black,minimum size=0.6cm,inner sep=0pt] (s2) at (1.5, .5) {$s_2$};
    \node[circle, draw=black,minimum size=0.6cm,inner sep=0pt] (s3) at (.75, -.5) {$s_3$};
    \node[circle, draw=black,minimum size=0.6cm,inner sep=0pt] (s4) at (1.5, -.5) {$s_4$};
    \node[circle, draw=black,minimum size=0.6cm,inner sep=0pt] (g1) at (2.25, 0) {$s_g$};
    \draw [-latex] (r1) to (s1);
    \draw [-latex] (r1) to (s3);
    \draw [-latex] (s1) to (s2);
    \draw [-latex] (s3) to (s4);
    \draw [-latex] (s2) to (g1);
    \draw [-latex] (s4) to (g1);
    \node[draw=black,dashed,inner sep=5pt,rounded corners=.3cm,rotate fit=53.13,fit=(s2)(s3)] {};
    \node[draw=black,dashed,inner sep=5pt,rounded corners=.3cm,rotate fit=-53.13,fit=(s1)(s4)] {};
    \node at (s3) [xshift=-.5cm, yshift=-.5cm] {$\varphi_1$};
    \node at (s4) [xshift=.5cm, yshift=-.5cm] {$\varphi_2$};
\end{tikzpicture}\\[0pt]
        
        \captsize \textbf{(a)} softlocking problem
    \end{minipage}
    \hfil
    \begin{minipage}[b]{.45\linewidth}
        \centering\begin{tikzpicture}[scale=1.5]
    \node[circle, draw=black,minimum size=0.6cm,inner sep=0pt] (r1) at (0, 0) {$s_0$};
    \node[circle, draw=black,minimum size=0.6cm,inner sep=0pt] (s1) at (.75, .5) {$s_1$};
    \node[circle, draw=black,minimum size=0.6cm,inner sep=0pt] (s2) at (1.5, .5) {$s_2$};
    \node[circle, draw=black,minimum size=0.6cm,inner sep=0pt] (s3) at (.75, -.5) {$s_3$};
    \node[circle, draw=black,minimum size=0.6cm,inner sep=0pt] (s4) at (1.5, -.5) {$s_4$};
    \node[circle, draw=black,minimum size=0.6cm,inner sep=0pt] (g1) at (2.25, -.5) {$s_g$};
    \draw [-latex] (r1) to (s1);
    \draw [-latex] (r1) to (s3);
    \draw [-latex] (s3) to (s4);
    \draw [-latex] (s1) to [out=30, in=150] (s2);
    \draw [-latex] (s2) to [out=210, in=330] (s1);
    \draw [-latex] (s4) to (g1);
    \node[draw=black,dashed,inner sep=3pt,rounded corners=.3cm,fit=(s1)(s3)] {};
    \node[draw=black,dashed,inner sep=3pt,rounded corners=.3cm,fit=(s2)(s4)] {};
    \node at (s3) [xshift=-.5cm, yshift=-.5cm] {$\varphi_1$};
    \node at (s4) [xshift=.5cm, yshift=-.5cm] {$\varphi_2$};
\end{tikzpicture}\\[0pt]
        \captsize \textbf{(b)} deadlocking problem
    \end{minipage}
    \caption{In both problems, consider the execution of $\mathsc{SequentialPlan}(\Sigma, s_0, \langle \varphi_1, \varphi_2\rangle)$. In \textbf{(a)}, if $\varphi_1$ is achieved by reaching state $s_2$, it is impossible to achieve $\varphi_2$. In \textbf{(b)}, if $\varphi_1$ is achieved by reaching state $s_1$, it is impossible to reach the final goal. 
    }
    \label{fig:locks}
\end{figure}
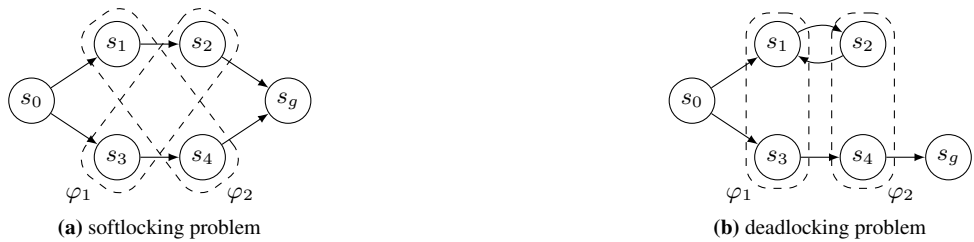
Consider the execution of $\mathsc{SequentialPlan}(\Sigma, s_0, \langle \varphi_1, \varphi_2\rangle)$. In Figure~\ref{fig:locks}(a), suppose policy $\pi_1$ achieves $\varphi_1$ by reaching state $s_2$. Upon completion of $\varphi_1$, the next landmark $\varphi_2$ is impossible to achieve, and thus the \mathsc{SequentialPlan} procedure fails on Line 4. In Figure~\ref{fig:locks}(b), if \mathsc{SequentialPlan} achieves landmarks $\varphi_1$ by reaching state $s_1$, a deadlock state is reached where the final goal is no longer achievable. 

\section{Additional early benchmarks}
\label{appendix:results_early}

\begin{figure}
    \centering
    \begin{minipage}[b]{4.1in}
        \input{float/tab/Prob_blocks_0_full}
        \raggedleft Average cost  \hspace{1.3in}
    \end{minipage}
    \begin{minipage}[b]{2.8in}
        \input{float/tab/Prob_blocks_0_pvals}
        \centering $p$-values
    \end{minipage}
    \begin{center}
        \captsize \textbf{(a)} \domain{prob\_blocksworld} problem \texttt{p1}. The landmark extraction algorithm $\mathrm{LM}^{\text{RHW}}$ generated 11 nontrivial landmarks for this problem. 
    \end{center}
    
    \begin{minipage}[b]{4.1in}
        \input{float/tab/Prob_blocks_1_full}
        \raggedleft Average cost  \hspace{1.3in}
    \end{minipage}
    \begin{minipage}[b]{2.8in}
        \input{float/tab/Prob_blocks_1_pvals}
        \centering $p$-values
    \end{minipage}
    \begin{center}
        \captsize\textbf{(b)} \domain{prob\_blocksworld} problem \texttt{p2}. The landmark extraction algorithm $\mathrm{LM}^{\text{RHW}}$ generated 11 nontrivial landmarks for this problem. 
    \end{center}
    
    \begin{minipage}[b]{4.1in}
        \input{float/tab/Prob_blocks_2_full}
        \raggedleft Average cost  \hspace{1.3in}
    \end{minipage}
    \begin{minipage}[b]{2.8in}
        \input{float/tab/Prob_blocks_2_pvals}
        \centering $p$-values
    \end{minipage}
    \begin{center}
        \captsize \textbf{(c)} \domain{prob\_blocksworld} problem \texttt{p3}. The landmark extraction algorithm $\mathrm{LM}^{\text{RHW}}$ generated 12 nontrivial landmarks for this problem. 
    \end{center}
    \caption{The left tables report the average cost (lower is better) of the solutions generated by \LAMP in the \domain{prob\_blocksworld} domain over 75 runs. Underlined values indicate the best performing $\alpha$-value in the row. The right tables report $p$-values comparing the average cost of solutions generated by LAMP with the average cost of solutions generated by standard UCT ($\alpha = 0$) at the same number of rollouts. In all tables, boldfaced values indicate where LAMP significantly dominated standard UCT at the same number of rollouts, with $p < 0.0125$, the adjusted level using a Bonferroni adjustment of $\alpha_{\textit{stat}} = 0.05/4$.}
    \label{fig:blocks_full}
\end{figure}

\begin{figure}
    \centering
    \begin{minipage}[b]{4.1in}
        \input{float/tab/Prob_blocks_3_full}
        \raggedleft Average cost  \hspace{1.3in}
    \end{minipage}
    \begin{minipage}[b]{2.8in}
        \input{float/tab/Prob_blocks_3_pvals}
        \centering $p$-values
    \end{minipage}
    \begin{center}
        \captsize \textbf{(d)} \domain{prob\_blocksworld} problem \texttt{p4}. The landmark extraction algorithm $\mathrm{LM}^{\text{RHW}}$ generated 12 nontrivial landmarks for this problem. 
    \end{center}
    
    \begin{minipage}[b]{4.1in}
        \input{float/tab/Prob_blocks_4_full}
        \raggedleft Average cost  \hspace{1.3in}
    \end{minipage}
    \begin{minipage}[b]{2.8in}
        \input{float/tab/Prob_blocks_4_pvals}
        \centering $p$-values
    \end{minipage}
    \begin{center}
        \captsize\textbf{(e)} \domain{prob\_blocksworld} problem \texttt{p5}. The landmark extraction algorithm $\mathrm{LM}^{\text{RHW}}$ generated 10 nontrivial landmarks for this problem. 
    \end{center}
    
    \begin{minipage}[b]{4.1in}
        \input{float/tab/Prob_blocks_5_full}
        \raggedleft Average cost  \hspace{1.3in}
    \end{minipage}
    \begin{minipage}[b]{2.8in}
        \input{float/tab/Prob_blocks_5_pvals}
        \centering $p$-values
    \end{minipage}
    \begin{center}
        \captsize \textbf{(f)} \domain{prob\_blocksworld} problem \texttt{p6}. The landmark extraction algorithm $\mathrm{LM}^{\text{RHW}}$ generated 20 nontrivial landmarks for this problem. 
    \end{center}
    \continuefigcaption{The left tables report the average cost (lower is better) of the solutions generated by \LAMP in the \domain{prob\_blocksworld} domain over 75 runs. Underlined values indicate the best performing $\alpha$-value in the row. The right tables report $p$-values comparing the average cost of solutions generated by LAMP with the average cost of solutions generated by standard UCT ($\alpha = 0$) at the same number of rollouts. In all tables, boldfaced values indicate where LAMP significantly dominated standard UCT at the same number of rollouts, with $p < 0.0125$, the adjusted level using a Bonferroni adjustment of $\alpha_{\textit{stat}} = 0.05/4$.}
\end{figure}

\begin{figure}
    \centering
    \begin{minipage}[b]{4.1in}
        \input{float/tab/Prob_blocks_6_full}
        \raggedleft Average cost  \hspace{1.3in}
    \end{minipage}
    \begin{minipage}[b]{2.8in}
        \input{float/tab/Prob_blocks_6_pvals}
        \centering $p$-values
    \end{minipage}
    \begin{center}
        \captsize \textbf{(g)} \domain{prob\_blocksworld} problem \texttt{p7}. The landmark extraction algorithm $\mathrm{LM}^{\text{RHW}}$ generated 26 nontrivial landmarks for this problem. 
    \end{center}
    
    \begin{minipage}[b]{4.1in}
        \input{float/tab/Prob_blocks_7_full}
        \raggedleft Average cost  \hspace{1.3in}
    \end{minipage}
    \begin{minipage}[b]{2.8in}
        \input{float/tab/Prob_blocks_7_pvals}
        \centering $p$-values
    \end{minipage}
    \begin{center}
        \captsize\textbf{(h)} \domain{prob\_blocksworld} problem \texttt{p8}. The landmark extraction algorithm $\mathrm{LM}^{\text{RHW}}$ generated 24 nontrivial landmarks for this problem. 
    \end{center}
    
    \begin{minipage}[b]{4.1in}
        \input{float/tab/Prob_blocks_8_full}
        \raggedleft Average cost  \hspace{1.3in}
    \end{minipage}
    \begin{minipage}[b]{2.8in}
        \input{float/tab/Prob_blocks_8_pvals}
        \centering $p$-values
    \end{minipage}
    \begin{center}
        \captsize \textbf{(i)} \domain{prob\_blocksworld} problem \texttt{p9}. The landmark extraction algorithm $\mathrm{LM}^{\text{RHW}}$ generated 22 nontrivial landmarks for this problem. 
    \end{center}
    \continuefigcaption{The left tables report the average cost (lower is better) of the solutions generated by \LAMP in the \domain{prob\_blocksworld} domain over 75 runs. Underlined values indicate the best performing $\alpha$-value in the row. The right tables report $p$-values comparing the average cost of solutions generated by LAMP with the average cost of solutions generated by standard UCT ($\alpha = 0$) at the same number of rollouts. In all tables, boldfaced values indicate where LAMP significantly dominated standard UCT at the same number of rollouts, with $p < 0.0125$, the adjusted level using a Bonferroni adjustment of $\alpha_{\textit{stat}} = 0.05/4$.}
\end{figure}

\begin{figure}
    \centering
    \begin{minipage}[b]{4.1in}
        \input{float/tab/Elevators_0_full}
        \raggedleft Average cost  \hspace{1.3in}
    \end{minipage}
    \begin{minipage}[b]{2.8in}
        \input{float/tab/Elevators_0_pvals}
        \centering $p$-values
    \end{minipage}
    \begin{center}
        \captsize \textbf{(a)} \domain{elevators} problem \texttt{p1}. The landmark extraction algorithm $\mathrm{LM}^{\text{RHW}}$ generated 10 nontrivial landmarks for this problem. 
    \end{center}

    \begin{minipage}[b]{4.1in}
        \input{float/tab/Elevators_1_full}
        \raggedleft Average cost  \hspace{1.3in}
    \end{minipage}
    \begin{minipage}[b]{2.8in}
        \input{float/tab/Elevators_1_pvals}
        \centering $p$-values
    \end{minipage}
    \begin{center}
        \captsize \textbf{(b)} \domain{elevators} problem \texttt{p2}. The landmark extraction algorithm $\mathrm{LM}^{\text{RHW}}$ generated 6 nontrivial landmarks for this problem. 
    \end{center}

    \begin{minipage}[b]{4.1in}
        \input{float/tab/Elevators_2_full}
        \raggedleft Average cost  \hspace{1.3in}
    \end{minipage}
    \begin{minipage}[b]{2.8in}
        \input{float/tab/Elevators_2_pvals}
        \centering $p$-values
    \end{minipage}
    \begin{center}
        \captsize \textbf{(c)} \domain{elevators} problem \texttt{p3}. The landmark extraction algorithm $\mathrm{LM}^{\text{RHW}}$ generated 11 nontrivial landmarks for this problem. 
    \end{center}
    \caption{The left tables report the average cost (lower is better) of the solutions generated by \LAMP in the \domain{elevators} domain over 75 runs. Underlined values indicate the best performing $\alpha$-value in the row. The right tables report $p$-values comparing the average cost of solutions generated by LAMP with the average cost of solutions generated by standard UCT ($\alpha = 0$) at the same number of rollouts. In all tables, boldfaced values indicate where LAMP significantly dominated standard UCT at the same number of rollouts, with $p < 0.0125$, the adjusted level using a Bonferroni adjustment of $\alpha_{\textit{stat}} = 0.05/4$.}
    \label{fig:elevators_full}
\end{figure}

\begin{figure}
    \centering
    \begin{minipage}[b]{4.1in}
        \input{float/tab/Elevators_3_full}
        \raggedleft Average cost  \hspace{1.3in}
    \end{minipage}
    \begin{minipage}[b]{2.8in}
        \input{float/tab/Elevators_3_pvals}
        \centering $p$-values
    \end{minipage}
    \begin{center}
        \captsize \textbf{(d) }\domain{elevators} problem \texttt{p4}. The landmark extraction algorithm $\mathrm{LM}^{\text{RHW}}$ generated 11 nontrivial landmarks for this problem. 
    \end{center}

    \begin{minipage}[b]{4.1in}
        \input{float/tab/Elevators_4_full}
        \raggedleft Average cost  \hspace{1.3in}
    \end{minipage}
    \begin{minipage}[b]{2.8in}
        \input{float/tab/Elevators_4_pvals}
        \centering $p$-values
    \end{minipage}
    \begin{center}
        \captsize \textbf{(e) }\domain{elevators} problem \texttt{p5}. The landmark extraction algorithm $\mathrm{LM}^{\text{RHW}}$ generated 11 nontrivial landmarks for this problem. 
    \end{center}

    \begin{minipage}[b]{4.1in}
        \input{float/tab/Elevators_5_full}
        \raggedleft Average cost  \hspace{1.3in}
    \end{minipage}
    \begin{minipage}[b]{2.8in}
        \input{float/tab/Elevators_5_pvals}
        \centering $p$-values
    \end{minipage}
    \begin{center}
        \captsize \textbf{(f) }\domain{elevators} problem \texttt{p6}. The landmark extraction algorithm $\mathrm{LM}^{\text{RHW}}$ generated 15 nontrivial landmarks for this problem. 
    \end{center}
    \continuefigcaption{The left tables report the average cost (lower is better) of the solutions generated by \LAMP in the \domain{elevators} domain over 75 runs. Underlined values indicate the best performing $\alpha$-value in the row. The right tables report $p$-values comparing the average cost of solutions generated by LAMP with the average cost of solutions generated by standard UCT ($\alpha = 0$) at the same number of rollouts. In all tables, boldfaced values indicate where LAMP significantly dominated standard UCT at the same number of rollouts, with $p < 0.0125$, the adjusted level using a Bonferroni adjustment of $\alpha_{\textit{stat}} = 0.05/4$.}
\end{figure}

\begin{figure}
    \centering
    \begin{minipage}[b]{4.1in}
        \input{float/tab/Elevators_6_full}
        \raggedleft Average cost  \hspace{1.3in}
    \end{minipage}
    \begin{minipage}[b]{2.8in}
        \input{float/tab/Elevators_6_pvals}
        \centering $p$-values
    \end{minipage}
    \begin{center}
        \captsize \textbf{(g) }\domain{elevators} problem \texttt{p7}. The landmark extraction algorithm $\mathrm{LM}^{\text{RHW}}$ generated 16 nontrivial landmarks for this problem. 
    \end{center}

    \begin{minipage}[b]{4.1in}
        \input{float/tab/Elevators_7_full}
        \raggedleft Average cost  \hspace{1.3in}
    \end{minipage}
    \begin{minipage}[b]{2.8in}
        \input{float/tab/Elevators_7_pvals}
        \centering $p$-values
    \end{minipage}
    \begin{center}
        \captsize \textbf{(h) }\domain{elevators} problem \texttt{p8}. The landmark extraction algorithm $\mathrm{LM}^{\text{RHW}}$ generated 16 nontrivial landmarks for this problem. 
    \end{center}
    
    \begin{minipage}[b]{4.1in}
        \input{float/tab/Elevators_8_full}
        \raggedleft Average cost  \hspace{1.3in}
    \end{minipage}
    \begin{minipage}[b]{2.8in}
        \input{float/tab/Elevators_8_pvals}
        \centering $p$-values
    \end{minipage}
    \begin{center}
        \captsize \textbf{(i) }\domain{elevators} problem \texttt{p9}. The landmark extraction algorithm $\mathrm{LM}^{\text{RHW}}$ generated 14 nontrivial landmarks for this problem. 
    \end{center}

    \continuefigcaption{The left tables report the average cost (lower is better) of the solutions generated by \LAMP in the \domain{elevators} domain over 75 runs. Underlined values indicate the best performing $\alpha$-value in the row. The right tables report $p$-values comparing the average cost of solutions generated by LAMP with the average cost of solutions generated by standard UCT ($\alpha = 0$) at the same number of rollouts. In all tables, boldfaced values indicate where LAMP significantly dominated standard UCT at the same number of rollouts, with $p < 0.0125$, the adjusted level using a Bonferroni adjustment of $\alpha_{\textit{stat}} = 0.05/4$.}
\end{figure}

\begin{figure}
    \centering
    \begin{minipage}[b]{4.1in}
        \input{float/tab/Elevators_9_full}
        \raggedleft Average cost  \hspace{1.3in}
    \end{minipage}
    \begin{minipage}[b]{2.8in}
        \input{float/tab/Elevators_9_pvals}
        \centering $p$-values
    \end{minipage}
    \begin{center}
        \captsize \textbf{(j) }\domain{elevators} problem \texttt{p10}. The landmark extraction algorithm $\mathrm{LM}^{\text{RHW}}$ generated 16 nontrivial landmarks for this problem. 
    \end{center}

    \begin{minipage}[b]{4.1in}
        \input{float/tab/Elevators_10_full}
        \raggedleft Average cost  \hspace{1.3in}
    \end{minipage}
    \begin{minipage}[b]{2.8in}
        \input{float/tab/Elevators_10_pvals}
        \centering $p$-values
    \end{minipage}
    \begin{center}
        \captsize \textbf{(k) }\domain{elevators} problem \texttt{p11}. The landmark extraction algorithm $\mathrm{LM}^{\text{RHW}}$ generated 26 nontrivial landmarks for this problem. 
    \end{center}
    
    \begin{minipage}[b]{4.1in}
        \input{float/tab/Elevators_11_full}
        \raggedleft Average cost  \hspace{1.3in}
    \end{minipage}
    \begin{minipage}[b]{2.8in}
        \input{float/tab/Elevators_11_pvals}
        \centering $p$-values
    \end{minipage}
    \begin{center}
        \captsize \textbf{(l) }\domain{elevators} problem \texttt{p12}. The landmark extraction algorithm $\mathrm{LM}^{\text{RHW}}$ generated 18 nontrivial landmarks for this problem. 
    \end{center}

    \continuefigcaption{The left tables report the average cost (lower is better) of the solutions generated by \LAMP in the \domain{elevators} domain over 75 runs. Underlined values indicate the best performing $\alpha$-value in the row. The right tables report $p$-values comparing the average cost of solutions generated by LAMP with the average cost of solutions generated by standard UCT ($\alpha = 0$) at the same number of rollouts. In all tables, boldfaced values indicate where LAMP significantly dominated standard UCT at the same number of rollouts, with $p < 0.0125$, the adjusted level using a Bonferroni adjustment of $\alpha_{\textit{stat}} = 0.05/4$.}
\end{figure}

\begin{figure}
    \centering 

    \begin{minipage}[b]{4.1in}
        \input{float/tab/Elevators_12_full}
        \raggedleft Average cost  \hspace{1.3in}
    \end{minipage}
    \begin{minipage}[b]{2.8in}
        \input{float/tab/Elevators_12_pvals}
        \centering $p$-values
    \end{minipage}
    \begin{center}
        \captsize \textbf{(m)} \domain{elevators} problem \texttt{p13}. The landmark extraction algorithm $\mathrm{LM}^{\text{RHW}}$ generated 30 nontrivial landmarks for this problem. 
    \end{center}

    \begin{minipage}[b]{4.1in}
        \input{float/tab/Elevators_13_full}
        \raggedleft Average cost  \hspace{1.3in}
    \end{minipage}
    \begin{minipage}[b]{2.8in}
        \input{float/tab/Elevators_13_pvals}
        \centering $p$-values
    \end{minipage}
    \begin{center}
        \captsize \textbf{(n)} \domain{elevators} problem \texttt{p14}. The landmark extraction algorithm $\mathrm{LM}^{\text{RHW}}$ generated 24 nontrivial landmarks for this problem. 
    \end{center}

    \begin{minipage}[b]{4.1in}
        \input{float/tab/Elevators_14_full}
        \raggedleft Average cost  \hspace{1.3in}
    \end{minipage}
    \begin{minipage}[b]{2.8in}
        \input{float/tab/Elevators_14_pvals}
        \centering $p$-values
    \end{minipage}
    \begin{center}
        \captsize \textbf{(o)} \domain{elevators} problem \texttt{p15}. The landmark extraction algorithm $\mathrm{LM}^{\text{RHW}}$ generated 24 nontrivial landmarks for this problem. 
    \end{center}

    \continuefigcaption{The left tables report the average cost (lower is better) of the solutions generated by \LAMP in the \domain{elevators} domain over 75 runs. Underlined values indicate the best performing $\alpha$-value in the row. The right tables report $p$-values comparing the average cost of solutions generated by LAMP with the average cost of solutions generated by standard UCT ($\alpha = 0$) at the same number of rollouts. In all tables, boldfaced values indicate where LAMP significantly dominated standard UCT at the same number of rollouts, with $p < 0.0125$, the adjusted level using a Bonferroni adjustment of $\alpha_{\textit{stat}} = 0.05/4$.}
\end{figure}

\begin{figure}
    \centering
    \begin{minipage}[b]{4.1in}
        \input{float/tab/Zeno_0_full}
        \raggedleft Average cost  \hspace{1.3in}
    \end{minipage}
    \begin{minipage}[b]{2.8in}
        \input{float/tab/Zeno_0_pvals}
        \centering $p$-values
    \end{minipage}
    \begin{center}
        \captsize \textbf{(a)} \domain{zenotravel} problem \texttt{p1}. The landmark extraction algorithm $\mathrm{LM}^{\text{RHW}}$ generated 0 nontrivial landmarks for this problem. This problem instance is trivial---the initial state satisfies the goal condition.
    \end{center}
    
    \begin{minipage}[b]{4.1in}
        \input{float/tab/Zeno_1_full}
        \raggedleft Average cost  \hspace{1.3in}
    \end{minipage}
    \begin{minipage}[b]{2.8in}
        \input{float/tab/Zeno_1_pvals}
        \centering $p$-values
    \end{minipage}
    \begin{center}
        \captsize\textbf{(b)} \domain{zenotravel} problem \texttt{p2}. The landmark extraction algorithm $\mathrm{LM}^{\text{RHW}}$ generated 11 nontrivial landmarks for this problem. 
    \end{center}
    
    \begin{minipage}[b]{4.1in}
        \input{float/tab/Zeno_2_full}
        \raggedleft Average cost  \hspace{1.3in}
    \end{minipage}
    \begin{minipage}[b]{2.8in}
        \input{float/tab/Zeno_2_pvals}
        \centering $p$-values
    \end{minipage}
    \begin{center}
        \captsize \textbf{(c)} \domain{zenotravel} problem \texttt{p3}. The landmark extraction algorithm $\mathrm{LM}^{\text{RHW}}$ generated 13 nontrivial landmarks for this problem. 
    \end{center}
    \caption{The left tables report the average cost (lower is better) of the solutions generated by \LAMP in the \domain{zenotravel} domain over 75 runs. Underlined values indicate the best performing $\alpha$-value in the row. The right tables report $p$-values comparing the average cost of solutions generated by LAMP with the average cost of solutions generated by standard UCT ($\alpha = 0$) at the same number of rollouts. In all tables, boldfaced values indicate where LAMP significantly dominated standard UCT at the same number of rollouts, with $p < 0.0125$, the adjusted level using a Bonferroni adjustment of $\alpha_{\textit{stat}} = 0.05/4$.}
    \label{fig:zeno_full}
\end{figure}

\begin{figure}
    \centering
    \begin{minipage}[b]{4.1in}
        \input{float/tab/Zeno_3_full}
        \raggedleft Average cost  \hspace{1.3in}
    \end{minipage}
    \begin{minipage}[b]{2.8in}
        \input{float/tab/Zeno_3_pvals}
        \centering $p$-values
    \end{minipage}
    \begin{center}
        \captsize \textbf{(d)} \domain{zenotravel} problem \texttt{p4}. The landmark extraction algorithm $\mathrm{LM}^{\text{RHW}}$ generated 11 nontrivial landmarks for this problem. 
    \end{center}
    
    \begin{minipage}[b]{4.1in}
        \input{float/tab/Zeno_4_full}
        \raggedleft Average cost  \hspace{1.3in}
    \end{minipage}
    \begin{minipage}[b]{2.8in}
        \input{float/tab/Zeno_4_pvals}
        \centering $p$-values
    \end{minipage}
    \begin{center}
        \captsize\textbf{(e)} \domain{zenotravel} problem \texttt{p5}. The landmark extraction algorithm $\mathrm{LM}^{\text{RHW}}$ generated 12 nontrivial landmarks for this problem. 
    \end{center}

    \continuefigcaption{The left tables report the average cost (lower is better) of the solutions generated by \LAMP in the \domain{zenotravel} domain over 75 runs. Underlined values indicate the best performing $\alpha$-value in the row. The right tables report $p$-values comparing the average cost of solutions generated by LAMP with the average cost of solutions generated by standard UCT ($\alpha = 0$) at the same number of rollouts. In all tables, boldfaced values indicate where LAMP significantly dominated standard UCT at the same number of rollouts, with $p < 0.0125$, the adjusted level using a Bonferroni adjustment of $\alpha_{\textit{stat}} = 0.05/4$.}
\end{figure}

\clearpage
\section{Additional probabilistically interesting benchmarks}
\label{appendix:results_prob_interesting}

\begin{figure}
    \centering
    \begin{minipage}[b]{3in}
        \input{float/tab/Ex_blocks_0_full}
        \raggedleft Success rate  \hspace{.8in}
    \end{minipage}
    \begin{minipage}[b]{2.8in}
        \input{float/tab/Ex_blocks_0_pvals}
        \centering $p$-values
    \end{minipage}
    \begin{center}
        \captsize \textbf{(a)} \domain{exploding\_blocksworld} problem \texttt{p1}. The landmark extraction algorithm $\mathrm{LM}^{\text{RHW}}$ generated 7 nontrivial landmarks for this problem. 
    \end{center}
    \begin{minipage}[b]{3in}
        \input{float/tab/Ex_blocks_1_full}
        \raggedleft Success rate  \hspace{.8in}
    \end{minipage}
    \begin{minipage}[b]{2.8in}
        \input{float/tab/Ex_blocks_1_pvals}
        \centering $p$-values
    \end{minipage}
    \begin{center}
        \captsize \textbf{(b)} \domain{exploding\_blocksworld} problem \texttt{p2}. The landmark extraction algorithm $\mathrm{LM}^{\text{RHW}}$ generated 4 nontrivial landmarks for this problem. 
    \end{center}
    \begin{minipage}[b]{3in}
        \input{float/tab/Ex_blocks_2_full}
        \raggedleft Success rate  \hspace{.8in}
    \end{minipage}
    \begin{minipage}[b]{2.8in}
        \input{float/tab/Ex_blocks_2_pvals}
        \centering $p$-values
    \end{minipage}
    \begin{center}
        \captsize \textbf{(c)} \domain{exploding\_blocksworld} problem \texttt{p3}. The landmark extraction algorithm $\mathrm{LM}^{\text{RHW}}$ generated 7 nontrivial landmarks for this problem. 
    \end{center}
    \caption{The left tables report the success rate (higher is better) of the solutions generated by \LAMP in the \domain{exploding\_blocksworld} domain over 75 runs. Underlined values indicate the best performing $\alpha$-value in the row. The right tables report $p$-values computed using a two-tailed Boschloo exact test comparing the success rate of LAMP with the success rate of standard UCT ($\alpha = 0$) at the same number of rollouts. In all tables, boldfaced values indicate where LAMP significantly dominated standard UCT at the same number of rollouts, with $p < 0.0125$, the adjusted level using a Bonferroni adjustment of $\alpha_{\textit{stat}} = 0.05/4$.}
    \label{fig:eblocks_full}
\end{figure}

\begin{figure}
    \centering
    \begin{minipage}[b]{3in}
        \input{float/tab/Ex_blocks_3_full}
        \raggedleft Success rate  \hspace{.8in}
    \end{minipage}
    \begin{minipage}[b]{2.8in}
        \input{float/tab/Ex_blocks_3_pvals}
        \centering $p$-values
    \end{minipage}
    \begin{center}
        \captsize \textbf{(d)} \domain{exploding\_blocksworld} problem \texttt{p4}. The landmark extraction algorithm $\mathrm{LM}^{\text{RHW}}$ generated 9 nontrivial landmarks for this problem. 
    \end{center}
    \begin{minipage}[b]{3in}
        \input{float/tab/Ex_blocks_4_full}
        \raggedleft Success rate  \hspace{.8in}
    \end{minipage}
    \begin{minipage}[b]{2.8in}
        \input{float/tab/Ex_blocks_4_pvals}
        \centering $p$-values
    \end{minipage}
    \begin{center}
        \captsize \textbf{(e)} \domain{exploding\_blocksworld} problem \texttt{p5}. The landmark extraction algorithm $\mathrm{LM}^{\text{RHW}}$ generated 0 nontrivial landmarks for this problem. This problem instance is trivial---the initial state satisfies the goal condition.
    \end{center}
    \begin{minipage}[b]{3in}
        \input{float/tab/Ex_blocks_5_full}
        \raggedleft Success rate  \hspace{.8in}
    \end{minipage}
    \begin{minipage}[b]{2.8in}
        \input{float/tab/Ex_blocks_5_pvals}
        \centering $p$-values
    \end{minipage}
    \begin{center}
        \captsize \textbf{(f)} \domain{exploding\_blocksworld} problem \texttt{p6}. The landmark extraction algorithm $\mathrm{LM}^{\text{RHW}}$ generated 16 nontrivial landmarks for this problem. 
    \end{center}
    \continuefigcaption{The left tables report the success rate (higher is better) of the solutions generated by \LAMP in the \domain{exploding\_blocksworld} domain over 75 runs. Underlined values indicate the best performing $\alpha$-value in the row. The right tables report $p$-values computed using a two-tailed Boschloo exact test comparing the success rate of LAMP with the success rate of standard UCT ($\alpha = 0$) at the same number of rollouts. In all tables, boldfaced values indicate where LAMP significantly dominated standard UCT at the same number of rollouts, with $p < 0.0125$, the adjusted level using a Bonferroni adjustment of $\alpha_{\textit{stat}} = 0.05/4$.}
\end{figure}

\begin{figure}
    \centering
    \begin{minipage}[b]{3in}
        \input{float/tab/Ex_blocks_6_full}
        \raggedleft Success rate  \hspace{.8in}
    \end{minipage}
    \begin{minipage}[b]{2.8in}
        \input{float/tab/Ex_blocks_6_pvals}
        \centering $p$-values
    \end{minipage}
    \begin{center}
        \captsize \textbf{(g)} \domain{exploding\_blocksworld} problem \texttt{p7}. The landmark extraction algorithm $\mathrm{LM}^{\text{RHW}}$ generated 17 nontrivial landmarks for this problem. 
    \end{center}
    \begin{minipage}[b]{3in}
        \input{float/tab/Ex_blocks_7_full}
        \raggedleft Success rate  \hspace{.8in}
    \end{minipage}
    \begin{minipage}[b]{2.8in}
        \input{float/tab/Ex_blocks_7_pvals}
        \centering $p$-values
    \end{minipage}
    \begin{center}
        \captsize \textbf{(h)} \domain{exploding\_blocksworld} problem \texttt{p8}. The landmark extraction algorithm $\mathrm{LM}^{\text{RHW}}$ generated 13 nontrivial landmarks for this problem. 
    \end{center}
    \begin{minipage}[b]{3in}
        \input{float/tab/Ex_blocks_8_full}
        \raggedleft Success rate  \hspace{.8in}
    \end{minipage}
    \begin{minipage}[b]{2.8in}
        \input{float/tab/Ex_blocks_8_pvals}
        \centering $p$-values
    \end{minipage}
    \begin{center}
        \captsize \textbf{(i)} \domain{exploding\_blocksworld} problem \texttt{p9}. The landmark extraction algorithm $\mathrm{LM}^{\text{RHW}}$ generated 17 nontrivial landmarks for this problem. 
    \end{center}
    \continuefigcaption{The left tables report the success rate (higher is better) of the solutions generated by \LAMP in the \domain{exploding\_blocksworld} domain over 75 runs. Underlined values indicate the best performing $\alpha$-value in the row. The right tables report $p$-values computed using a two-tailed Boschloo exact test comparing the success rate of LAMP with the success rate of standard UCT ($\alpha = 0$) at the same number of rollouts. In all tables, boldfaced values indicate where LAMP significantly dominated standard UCT at the same number of rollouts, with $p < 0.0125$, the adjusted level using a Bonferroni adjustment of $\alpha_{\textit{stat}} = 0.05/4$.}
\end{figure}

\begin{figure}
    \centering
    \begin{minipage}[b]{3in}
        \input{float/tab/Tire_0_full}
        \raggedleft Success rate  \hspace{.8in}
    \end{minipage}
    \begin{minipage}[b]{2.8in}
        \input{float/tab/Tire_0_pvals}
        \centering $p$-values
    \end{minipage}
    \begin{center}
        \captsize \textbf{(a)} \domain{tireworld} problem \texttt{p1}. The landmark extraction algorithm $\mathrm{LM}^{\text{RHW}}$ generated 5 nontrivial landmarks for this problem. 
    \end{center}

    \begin{minipage}[b]{3in}
        \input{float/tab/Tire_1_full}
        \raggedleft Success rate  \hspace{.8in}
    \end{minipage}
    \begin{minipage}[b]{2.8in}
        \input{float/tab/Tire_1_pvals}
        \centering $p$-values
    \end{minipage}
    \begin{center}
        \captsize \textbf{(b)} \domain{tireworld} problem \texttt{p2}. The landmark extraction algorithm $\mathrm{LM}^{\text{RHW}}$ generated 1 nontrivial landmark for this problem. 
    \end{center}

    \begin{minipage}[b]{3in}
        \input{float/tab/Tire_2_full}
        \raggedleft Success rate  \hspace{.8in}
    \end{minipage}
    \begin{minipage}[b]{2.8in}
        \input{float/tab/Tire_2_pvals}
        \centering $p$-values
    \end{minipage}
    \begin{center}
        \captsize \textbf{(c)} \domain{tireworld} problem \texttt{p3}. The landmark extraction algorithm $\mathrm{LM}^{\text{RHW}}$ generated 2 nontrivial landmarks for this problem. 
    \end{center}
    \caption{The left tables report the success rate (higher is better) of the solutions generated by \LAMP in the \domain{tireworld} domain over 75 runs. Underlined values indicate the best performing $\alpha$-value in the row. The right tables report $p$-values computed using a two-tailed Boschloo exact test comparing the success rate of LAMP with the success rate of standard UCT ($\alpha = 0$) at the same number of rollouts. In all tables, boldfaced values indicate where LAMP significantly dominated standard UCT at the same number of rollouts, with $p < 0.0125$, the adjusted level using a Bonferroni adjustment of $\alpha_{\textit{stat}} = 0.05/4$.}
    \label{fig:tire_full}
\end{figure}

\begin{figure}
    \centering
    \begin{minipage}[b]{3in}
        \input{float/tab/Tire_3_full}
        \raggedleft Success rate  \hspace{.8in}
    \end{minipage}
    \begin{minipage}[b]{2.8in}
        \input{float/tab/Tire_3_pvals}
        \centering $p$-values
    \end{minipage}
    \begin{center}
        \captsize \textbf{(d) }\domain{tireworld} problem \texttt{p4}. The landmark extraction algorithm $\mathrm{LM}^{\text{RHW}}$ generated 2 nontrivial landmarks for this problem. 
    \end{center}

    \begin{minipage}[b]{3in}
        \input{float/tab/Tire_4_full}
        \raggedleft Success rate  \hspace{.8in}
    \end{minipage}
    \begin{minipage}[b]{2.8in}
        \input{float/tab/Tire_4_pvals}
        \centering $p$-values
    \end{minipage}
    \begin{center}
        \captsize \textbf{(e) }\domain{tireworld} problem \texttt{p5}. The landmark extraction algorithm $\mathrm{LM}^{\text{RHW}}$ generated 1 nontrivial landmark for this problem. 
    \end{center}

    \begin{minipage}[b]{3in}
        \input{float/tab/Tire_5_full}
        \raggedleft Success rate  \hspace{.8in}
    \end{minipage}
    \begin{minipage}[b]{2.8in}
        \input{float/tab/Tire_5_pvals}
        \centering $p$-values
    \end{minipage}
    \begin{center}
        \captsize \textbf{(f) }\domain{tireworld} problem \texttt{p6}. The landmark extraction algorithm $\mathrm{LM}^{\text{RHW}}$ generated 2 nontrivial landmarks for this problem. 
    \end{center}
    \continuefigcaption{The left tables report the success rate (higher is better) of the solutions generated by \LAMP in the \domain{tireworld} domain over 75 runs. Underlined values indicate the best performing $\alpha$-value in the row. The right tables report $p$-values computed using a two-tailed Boschloo exact test comparing the success rate of LAMP with the success rate of standard UCT ($\alpha = 0$) at the same number of rollouts. In all tables, boldfaced values indicate where LAMP significantly dominated standard UCT at the same number of rollouts, with $p < 0.0125$, the adjusted level using a Bonferroni adjustment of $\alpha_{\textit{stat}} = 0.05/4$.}
\end{figure}

\begin{figure}
    \centering
    \begin{minipage}[b]{3in}
        \input{float/tab/Tire_6_full}
        \raggedleft Success rate  \hspace{.8in}
    \end{minipage}
    \begin{minipage}[b]{2.8in}
        \input{float/tab/Tire_6_pvals}
        \centering $p$-values
    \end{minipage}
    \begin{center}
        \captsize \textbf{(g) }\domain{tireworld} problem \texttt{p7}. The landmark extraction algorithm $\mathrm{LM}^{\text{RHW}}$ generated 2 nontrivial landmarks for this problem. 
    \end{center}

    \begin{minipage}[b]{3in}
        \input{float/tab/Tire_7_full}
        \raggedleft Success rate  \hspace{.8in}
    \end{minipage}
    \begin{minipage}[b]{2.8in}
        \input{float/tab/Tire_7_pvals}
        \centering $p$-values
    \end{minipage}
    \begin{center}
        \captsize \textbf{(h) }\domain{tireworld} problem \texttt{p8}. The landmark extraction algorithm $\mathrm{LM}^{\text{RHW}}$ generated 1 nontrivial landmark for this problem. 
    \end{center}
    
    \begin{minipage}[b]{3in}
        \input{float/tab/Tire_8_full}
        \raggedleft Success rate  \hspace{.8in}
    \end{minipage}
    \begin{minipage}[b]{2.8in}
        \input{float/tab/Tire_8_pvals}
        \centering $p$-values
    \end{minipage}
    \begin{center}
        \captsize \textbf{(i) }\domain{tireworld} problem \texttt{p9}. The landmark extraction algorithm $\mathrm{LM}^{\text{RHW}}$ generated 1 nontrivial landmark for this problem. 
    \end{center}

    \continuefigcaption{The left tables report the success rate (higher is better) of the solutions generated by \LAMP in the \domain{tireworld} domain over 75 runs. Underlined values indicate the best performing $\alpha$-value in the row. The right tables report $p$-values computed using a two-tailed Boschloo exact test comparing the success rate of LAMP with the success rate of standard UCT ($\alpha = 0$) at the same number of rollouts. In all tables, boldfaced values indicate where LAMP significantly dominated standard UCT at the same number of rollouts, with $p < 0.0125$, the adjusted level using a Bonferroni adjustment of $\alpha_{\textit{stat}} = 0.05/4$.}
\end{figure}

\begin{figure}
    \centering
    \begin{minipage}[b]{3in}
        \input{float/tab/Tire_9_full}
        \raggedleft Success rate  \hspace{.8in}
    \end{minipage}
    \begin{minipage}[b]{2.8in}
        \input{float/tab/Tire_9_pvals}
        \centering $p$-values
    \end{minipage}
    \begin{center}
        \captsize \textbf{(j) }\domain{tireworld} problem \texttt{p10}. The landmark extraction algorithm $\mathrm{LM}^{\text{RHW}}$ generated 1 nontrivial landmark for this problem. 
    \end{center}

    \begin{minipage}[b]{3in}
        \input{float/tab/Tire_10_full}
        \raggedleft Success rate  \hspace{.8in}
    \end{minipage}
    \begin{minipage}[b]{2.8in}
        \input{float/tab/Tire_10_pvals}
        \centering $p$-values
    \end{minipage}
    \begin{center}
        \captsize \textbf{(k) }\domain{tireworld} problem \texttt{p11}. The landmark extraction algorithm $\mathrm{LM}^{\text{RHW}}$ generated 1 nontrivial landmark for this problem. 
    \end{center}
    
    \begin{minipage}[b]{3in}
        \input{float/tab/Tire_11_full}
        \raggedleft Success rate  \hspace{.8in}
    \end{minipage}
    \begin{minipage}[b]{2.8in}
        \input{float/tab/Tire_11_pvals}
        \centering $p$-values
    \end{minipage}
    \begin{center}
        \captsize \textbf{(l) }\domain{tireworld} problem \texttt{p12}. The landmark extraction algorithm $\mathrm{LM}^{\text{RHW}}$ generated 1 nontrivial landmark for this problem. 
    \end{center}

    \continuefigcaption{The left tables report the success rate (higher is better) of the solutions generated by \LAMP in the \domain{tireworld} domain over 75 runs. Underlined values indicate the best performing $\alpha$-value in the row. The right tables report $p$-values computed using a two-tailed Boschloo exact test comparing the success rate of LAMP with the success rate of standard UCT ($\alpha = 0$) at the same number of rollouts. In all tables, boldfaced values indicate where LAMP significantly dominated standard UCT at the same number of rollouts, with $p < 0.0125$, the adjusted level using a Bonferroni adjustment of $\alpha_{\textit{stat}} = 0.05/4$.}
\end{figure}

\begin{figure}
    \centering 

    \begin{minipage}[b]{3in}
        \input{float/tab/Tire_12_full}
        \raggedleft Success rate  \hspace{.8in}
    \end{minipage}
    \begin{minipage}[b]{2.8in}
        \input{float/tab/Tire_12_pvals}
        \centering $p$-values
    \end{minipage}
    \begin{center}
        \captsize \textbf{(m)} \domain{tireworld} problem \texttt{p13}. The landmark extraction algorithm $\mathrm{LM}^{\text{RHW}}$ generated 1 nontrivial landmark for this problem. 
    \end{center}

    \begin{minipage}[b]{3in}
        \input{float/tab/Tire_13_full}
        \raggedleft Success rate  \hspace{.8in}
    \end{minipage}
    \begin{minipage}[b]{2.8in}
        \input{float/tab/Tire_13_pvals}
        \centering $p$-values
    \end{minipage}
    \begin{center}
        \captsize \textbf{(n)} \domain{tireworld} problem \texttt{p14}. The landmark extraction algorithm $\mathrm{LM}^{\text{RHW}}$ generated 1 nontrivial landmark for this problem. 
    \end{center}

    \begin{minipage}[b]{3in}
        \input{float/tab/Tire_14_full}
        \raggedleft Success rate  \hspace{.8in}
    \end{minipage}
    \begin{minipage}[b]{2.8in}
        \input{float/tab/Tire_14_pvals}
        \centering $p$-values
    \end{minipage}
    \begin{center}
        \captsize \textbf{(o)} \domain{tireworld} problem \texttt{p15}. The landmark extraction algorithm $\mathrm{LM}^{\text{RHW}}$ generated 3 nontrivial landmarks for this problem. 
    \end{center}

    \continuefigcaption{The left tables report the success rate (higher is better) of the solutions generated by \LAMP in the \domain{tireworld} domain over 75 runs. Underlined values indicate the best performing $\alpha$-value in the row. The right tables report $p$-values computed using a two-tailed Boschloo exact test comparing the success rate of LAMP with the success rate of standard UCT ($\alpha = 0$) at the same number of rollouts. In all tables, boldfaced values indicate where LAMP significantly dominated standard UCT at the same number of rollouts, with $p < 0.0125$, the adjusted level using a Bonferroni adjustment of $\alpha_{\textit{stat}} = 0.05/4$.}
\end{figure}

\begin{figure}
    \centering
    \begin{minipage}[b]{3in}
        \input{float/tab/Tire_triangle_0_full}
        \raggedleft Success rate  \hspace{.8in}
    \end{minipage}
    \begin{minipage}[b]{2.8in}
        \input{float/tab/Tire_triangle_0_pvals}
        \centering $p$-values
    \end{minipage}
    \begin{center}
        \captsize \textbf{(a)} \domain{triangle\_tireworld} problem \texttt{p1}. The landmark extraction algorithm $\mathrm{LM}^{\text{RHW}}$ generated 2 nontrivial landmarks for this problem. 
    \end{center}
    \begin{minipage}[b]{3in}
        \input{float/tab/Tire_triangle_1_full}
        \raggedleft Success rate  \hspace{.8in}
    \end{minipage}
    \begin{minipage}[b]{2.8in}
        \input{float/tab/Tire_triangle_1_pvals}
        \centering $p$-values
    \end{minipage}
    \begin{center}
        \captsize \textbf{(b)} \domain{triangle\_tireworld} problem \texttt{p2}. The landmark extraction algorithm $\mathrm{LM}^{\text{RHW}}$ generated 4 nontrivial landmarks for this problem. 
    \end{center}
    \begin{minipage}[b]{3in}
        \input{float/tab/Tire_triangle_2_full}
        \raggedleft Success rate  \hspace{.8in}
    \end{minipage}
    \begin{minipage}[b]{2.8in}
        \input{float/tab/Tire_triangle_2_pvals}
        \centering $p$-values
    \end{minipage}
    \begin{center}
        \captsize \textbf{(c)} \domain{triangle\_tireworld} problem \texttt{p3}. The landmark extraction algorithm $\mathrm{LM}^{\text{RHW}}$ generated 4 nontrivial landmarks for this problem. 
    \end{center}
    \caption{The left tables report the success rate (higher is better) of the solutions generated by \LAMP in the \domain{triangle\_tireworld} domain over 75 runs. Underlined values indicate the best performing $\alpha$-value in the row. The right tables report $p$-values computed using a two-tailed Boschloo exact test comparing the success rate of LAMP with the success rate of standard UCT ($\alpha = 0$) at the same number of rollouts.}
    \label{fig:ttire_full}
\end{figure}

\begin{figure}  
    \centering
    \begin{minipage}[b]{3in}
        \input{float/tab/Tire_triangle_3_full}
        \raggedleft Success rate  \hspace{.8in}
    \end{minipage}
    \begin{minipage}[b]{2.8in}
        \input{float/tab/Tire_triangle_3_pvals}
        \centering $p$-values
    \end{minipage}
    \begin{center}
        \captsize \textbf{(d)} \domain{triangle\_tireworld} problem \texttt{p4}. The landmark extraction algorithm $\mathrm{LM}^{\text{RHW}}$ generated 4 nontrivial landmarks for this problem. 
    \end{center}

    \begin{minipage}[b]{3in}
        \input{float/tab/Tire_triangle_4_full}
        \raggedleft Success rate  \hspace{.8in}
    \end{minipage}
    \begin{minipage}[b]{2.8in}
        \input{float/tab/Tire_triangle_4_pvals}
        \centering $p$-values
    \end{minipage}
    \begin{center}
        \captsize \textbf{(e)} \domain{triangle\_tireworld} problem \texttt{p5}. The landmark extraction algorithm $\mathrm{LM}^{\text{RHW}}$ generated 4 nontrivial landmarks for this problem. 
    \end{center}
    \continuefigcaption{The left tables report the success rate (higher is better) of the solutions generated by \LAMP in the \domain{triangle\_tireworld} domain over 75 runs. Underlined values indicate the best performing $\alpha$-value in the row. The right tables report $p$-values computed using a two-tailed Boschloo exact test comparing the success rate of LAMP with the success rate of standard UCT ($\alpha = 0$) at the same number of rollouts.}
\end{figure}

\end{document}